\documentclass[lettersize,journal]{IEEEtran}
\usepackage{amsmath,amsfonts}
\usepackage{algorithmic}
\usepackage{algorithm}
\usepackage{array}
\usepackage[caption=false,font=normalsize,labelfont=sf,textfont=sf]{subfig}
\usepackage{textcomp}
\usepackage{stfloats}
\usepackage{url}
\usepackage{verbatim}
\usepackage{graphicx}
\usepackage{cite}
\usepackage{booktabs}
\usepackage{multirow}
\usepackage{makecell}
\usepackage[table]{xcolor}
\usepackage[most]{tcolorbox}
\usepackage{xcolor}
\usepackage{graphicx}
\usepackage{array}
\usepackage{pifont}

\usepackage{amsthm}
\newtheorem{assumption}{Assumption}
\newtheorem{definition}{Definition}

\newtheorem{theorem}{Theorem}
\usepackage{arydshln}

\definecolor{PromptGrayBg}{RGB}{247,247,247}
\definecolor{PromptGrayFrame}{RGB}{185,185,185}
\definecolor{DeepOrange}{RGB}{213, 87, 0}
\definecolor{DeepPink}{RGB}{205, 90, 140}

\newtcolorbox{promptbox}[3][]{%
    enhanced,
    colback=#2,
    colframe=#3,
    coltitle=black,
    colbacktitle=#2,
    fonttitle=\bfseries,
    fontupper=\small,
    title={#1},
    boxrule=0.6pt,
    arc=1mm,
    left=2mm,
    right=2mm,
    top=1.5mm,
    bottom=1.5mm,
    before skip=0pt,
    after skip=0pt
}

\begin{document}

\title{From Glance to Focus: Mitigating Recognition Laziness in MLLMs for Fine-Grained Recognition}

\author{Yutong Yang, Zhuqi Liu, Mouxing Yang, and Xi Peng%
\thanks{Y. Yang, Z. Liu, and M. Yang are with the College of Computer Science, Sichuan University, Chengdu 610065, China
(e-mail: \{yutongyang.gm, yunfree36, yangmouxing\}@gmail.com).
X. Peng is with the School of Artificial Intelligence, Sichuan University, Chengdu 610065, China,
and also with Tianfu Jincheng Laboratory, Chengdu 610093, China
(e-mail: pengx.gm@gmail.com).
}%
}


\maketitle
            
\begin{abstract}
Multimodal Large Language Models (MLLMs) generally fall short of expectations in Fine-Grained Visual Recognition (FGVR), although they exhibit promising FGVR foundations, \textit{i.e.}, visual perception and broad category knowledge.
Most existing studies seek to remedy this deficiency by adopting domain-specific post-training, at the risk of compromising generalization to unseen domains.
Different from these efforts, in this paper, we revisit the FGVR mechanism underlying MLLMs from both the empirical and theoretical perspectives and uncover an inference-time phenomenon termed \textit{FGVR Laziness}.
In brief, FGVR Laziness refers to the insufficient association between visual cues and category concepts under certain candidate category lists and recognition guidance, leaving the FGVR foundations of MLLMs largely under-utilized.
To mitigate FGVR Laziness, we propose Laziness-resIstant Visual rEcognition (LIVE), a training-free framework which jointly refines candidate lists and recognition guidance to elicit the latent FGVR capability of MLLMs.
Specifically, LIVE first performs query-adaptive candidate inference to derive a compact yet reliable candidate list.
Meanwhile, LIVE constructs category-specific FGVR skills by mining the intra-category characteristics and inter-category differences.
Finally, LIVE selectively invokes detailed skill guidance according to recognition uncertainty and category confusion, shifting MLLM-based FGVR from a cursory glance to a focus on associating discriminative visual cues with category concepts.
Extensive experiments on 13 benchmarks against 24 state-of-the-art methods spanning fine-grained category recognition, action recognition, and general category recognition, verify the effectiveness and generalizability of LIVE across different MLLM backbones. 
The code will be released upon acceptance.
\end{abstract}

\begin{IEEEkeywords}
Multimodal Large Language Models, Fine-Grained Visual Recognition, FGVR Laziness, Skills
\end{IEEEkeywords}

\section{Introduction}
As a long-standing challenge in computer vision, Fine-Grained Visual Recognition (FGVR) aims to distinguish subordinate categories characterized by nuanced visual differences, thereby supporting a wide range of applications, including but not limited to species identification~\cite{inaturalist}, activity analysis~\cite{finegym}, intelligent surveillance~\cite{yang2024robust}, and medical diagnosis~\cite{dermatologist}.
The key to FGVR lies in pinpointing subtle visual cues in the query image with respect to each candidate category and associating them with category-specific concepts.
At the intersection of vision foundation models and large language models, Multimodal Large Language Models (MLLMs)~\cite{qwen2.5-vl,llava-onevision,internvl3} naturally combine promising perceptual capability with broad-spectrum general knowledge, thus being expected to provide a promising avenue for FGVR.
However, recent studies~\cite{whybad,finedefics} have demonstrated that most existing MLLMs still fall short of expectations in FGVR and even underperform their vision counterparts, such as CLIP~\cite{clip} and SigLIP~\cite{siglip}. 

To tackle this dilemma, existing studies generally fall into the following two paradigms. 
To be specific, the perception-enhanced paradigm typically employs prompt optimization~\cite{autosep} or reinforcement learning~\cite{dive-k,fine-r1} to strengthen the fine-grained perception capability of MLLMs, enabling them to inspect local regions, compare discriminative features, and thus infer categories from subtle visual cues.
In contrast, the knowledge-enhanced paradigm aims to strengthen the cognition capability of MLLMs regarding subordinate categories by resorting to large-scale fine-grained annotated data~\cite{FOCI} or expert-level domain knowledge, such as detailed category descriptions~\cite{finedefics} and hierarchical category information~\cite{TARA}.
Despite their notable progress, most existing approaches rely heavily on post-training to construct FGVR-oriented MLLMs.
As a result, the acquired FGVR capability tends to be tied to specific training recipes and seen domains, which inevitably limits the generalization of MLLMs to unseen fine-grained scenarios, as verified in our experiments.

Different from existing studies that exhaustively adopt post-training to improve the FGVR capability of MLLMs, in this paper, we first revisit the FGVR mechanism underlying MLLMs.
As shown in Fig.~\ref{fig:motivation}(a), MLLMs can not only effectively perceive subtle visual cues but also demonstrate a sophisticated understanding of category-specific concepts.
Nevertheless, these foundational capabilities might not be fully elicited under certain recognition conditions, a phenomenon we refer to as \textit{FGVR Laziness}.
To examine the factors underlying FGVR Laziness, we present the following three observations in Fig.~\ref{fig:motivation}(b)-(c).
First, candidate-wise visual inspection is sensitive to presentation position, as moving the ground-truth category forward generally increases category-to-image attention and improves recognition accuracy.
Second, when the candidate list is overly large or contains highly confusable categories, MLLMs might settle on plausible yet incorrect predictions, leaving the ground-truth category insufficiently associated with the query image.
Intuitively, if these prediction errors arose solely from an inability to distinguish the correct category from the originally predicted category, modifying the remaining candidates alone would be unlikely to resolve the confusion.
Surprisingly, simply replacing or removing the other candidates could correct a substantial proportion of incorrect predictions, even when the originally predicted category remains among the candidates.
Furthermore, the accompanying increase in category-to-image attention suggests that the available visual cues and category concepts can be more effectively associated under appropriate recognition conditions.
Third, introducing explicit inter-category comparison guidance could also correct erroneous predictions, steering MLLMs to inspect the visual differences needed to distinguish highly confusable categories.

Beyond the above empirical studies, we further conduct comprehensive theoretical analyses of FGVR Laziness.
These empirical and theoretical findings together suggest that FGVR Laziness hinders the utilization of FGVR capabilities rooted in MLLMs, highlighting the joint refinement of candidate lists and recognition guidance as a promising avenue for mitigating FGVR Laziness and eliciting the latent FGVR capabilities.
However, translating these insights into practice remains non-trivial, raising three key challenges.
i) Which candidates should be retained? 
Since the ground-truth category is unavailable at inference time, candidate list construction must balance compactness against reliable ground-truth coverage. 
ii) How to derive the comparison guidance? 
General descriptions are often insufficient, calling for effective guidance that captures visually grounded differences between confusable categories. 
iii) When and for which categories should detailed guidance be invoked?
To avoid redundant context and additional interference, detailed guidance should be invoked only for uncertain queries and their most confusable candidates.

\begin{figure*}[t]
    \centering
    \includegraphics[width=\linewidth]{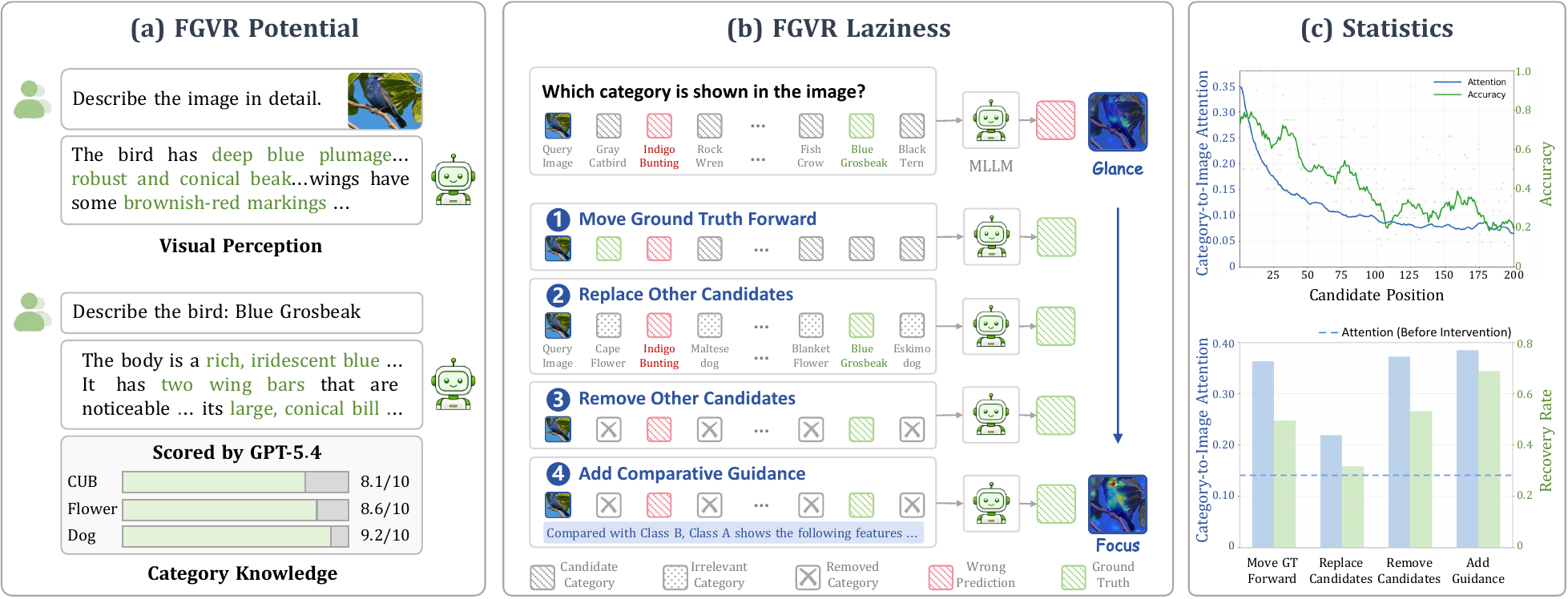}
    \caption{
    \textbf{Our motivations and observations.}
    \textbf{(a) FGVR Potential}.
    MLLMs can describe fine-grained visual cues and category-specific characteristics, demonstrating promising perceptual and conceptual capabilities required for FGVR.
    \textbf{(b) FGVR Laziness}. 
    FGVR Laziness refers to the inference-time phenomenon where MLLMs might fail to consistently inspect the query image with respect to all plausible candidate categories.
    To investigate this phenomenon, we conduct the following two kinds of controlled experiments.
    i) Position variation: we move the ground-truth category forward in the candidate list while keeping the other candidates unchanged (\ding{172}).
    ii) Candidate-competition variation: we keep the ground-truth category and the original mispredicted category unchanged, while either replacing the remaining candidates with irrelevant categories (\ding{173}) or removing them altogether (\ding{174}).
    Furthermore, we introduce explicit inter-category comparison guidance to distinguish the ground-truth category from the original mispredicted category (\ding{175}). 
    \textbf{(c) Statistics}.
    Quantitative results show that moving the ground-truth category forward, reducing candidate competition, and introducing comparison guidance generally increase category-to-image attention and improve recognition.
    In this figure, \textit{Recovery Rate} denotes the proportion of initially incorrect predictions that are corrected under each setting.
    The attention score measures category-to-image attention averaged over ground-truth category tokens.
    Qwen2.5-VL-7B serves as the default MLLM, while GPT-5.4 is employed to evaluate the category-specific descriptions generated by the MLLM.
}
    \label{fig:motivation}
\end{figure*}

Based on the above observations and discussions, we propose Laziness-resIstant Visual rEcognition (LIVE), a novel training-free framework tailored to alleviate FGVR Laziness and elicit the latent FGVR capability of MLLMs.
In brief, LIVE consists of three core modules, namely, Plausible Candidate Inference (PCI), Category Skill Construction (CSC) and Skill-inspired Confusion Resolution (SCR), which respectively address the aforementioned three key challenges. 
To begin with, the PCI module jointly models query-specific visual relevance and inter-category affinity, thus narrowing the full category list down to a compact candidate list while maintaining reliable ground-truth coverage.
Meanwhile, the CSC module performs self-reflection to summarize intra-category visual cues and inter-category differences, thereby forming category-specific and reusable FGVR skills.
Based on the constructed candidate list and FGVR skills, the SCR module invokes detailed skill guidance on demand based on recognition uncertainty and category confusion, steering MLLMs toward inspecting discriminative regions and cues that distinguish the most confusable candidates.
Overall, by dexterously integrating these three modules, LIVE shifts recognition from a cursory glance over plausible candidates to a focus on associating discriminative visual cues with category concepts, thus achieving on-the-fly FGVR Laziness mitigation at inference time.

The main contributions of this paper are summarized as follows:
\begin{itemize}
    \item We reveal that MLLMs inherently possess promising FGVR potential while being largely impeded by FGVR Laziness.
    As a remedy, our empirical and theoretical analyses establish the joint refinement of candidate lists and recognition guidance as a promising avenue for mitigating FGVR Laziness, broadening the perspective of MLLM-based FGVR studies from post-training-based capability acquisition to latent capability elicitation.
    \item We propose LIVE, an on-the-fly FGVR Laziness mitigation approach that elicits the latent FGVR capabilities of off-the-shelf MLLMs. 
    In brief, by integrating the PCI, CSC, and SCR modules, LIVE shifts recognition from cursory inspection of plausible candidates to focused verification of the discriminative cues that distinguish them. As a result, LIVE enables effective FGVR without additional training.
    \item We conduct extensive experiments on three representative recognition tasks, namely, fine-grained image recognition, fine-grained action recognition, and general image recognition. Results on $13$ benchmarks against $24$ competitive methods verify the effectiveness of LIVE. Furthermore, we demonstrate that LIVE can serve as a plug-and-play solution for improving FGVR across different off-the-shelf MLLMs.
\end{itemize}
\section{Related Work}

In this section, we briefly review three topics highly related to this work, including vanilla fine-grained visual recognition, MLLM-based fine-grained visual recognition, and attention bias in MLLMs.

\subsection{Vanilla Fine-Grained Visual Recognition}

Fine-Grained Visual Recognition (FGVR) aims to distinguish subordinate categories with highly similar overall appearances but subtle visual differences. 
To this end, conventional FGVR methods~\cite{part-rcnn,ps-cnn} generally rely on fine-grained annotations to learn task-specific discriminative representations.
Despite their effectiveness, these methods typically rely on task-specific supervision, making the resulting recognition capability closely tied to predefined categories and target domains. 
As a remedy, Vision-Language Models (VLMs)~\cite{clip} have emerged as a more general recognition paradigm by aligning images and category semantics in a shared embedding space. 
Specifically, each category can be represented by a simple textual prompt and recognized through image-text similarity matching.
To further adapt pretrained VLMs to fine-grained scenarios, existing approaches could be broadly grouped into the following two categories: 
i) task-adaptation methods, which leverage task-specific samples to refine the prediction by optimizing prompts~\cite{coop,maple,CoCoOp}, introducing lightweight adapters~\cite{clip-adapter,mma}, or constructing non-parametric caches~\cite{tip-adapter}; 
ii) LLM-enhanced methods, which exploit the rich knowledge of LLMs to generate category-specific descriptions~\cite{dclip,cupl} or automatically optimize discriminative textual prompts~\cite{proapo}, thereby enriching textual representations for fine-grained image-text matching.

Among the above approaches, LLM-enhanced methods are most relevant to our work but follow a different recognition paradigm.
Specifically, these methods mainly employ LLMs to generate category-specific knowledge for enriching textual representations, while relying on VLM-based image-text matching for final recognition.
In contrast, MLLMs integrate visual perception and category knowledge within a unified model, enabling direct reasoning over visual inputs and the corresponding category concepts.
Building upon this paradigm, we revisit the limited FGVR performance of MLLMs and reveal an under-explored inference-time phenomenon termed FGVR Laziness, where off-the-shelf MLLMs possess promising FGVR foundations yet fail to sufficiently utilize them during recognition.
To mitigate FGVR Laziness, we propose LIVE, an on-the-fly approach that elicits the latent FGVR capability of MLLMs, thereby enabling effective fine-grained recognition without additional training.

\subsection{MLLM-based Fine-Grained Visual Recognition}

Large-scale multimodal pre-training has endowed MLLMs with strong visual perception and broad category knowledge, making them promising for FGVR.
Nevertheless, directly applying general-purpose MLLMs to FGVR still yields unsatisfactory performance.
To improve MLLM-based FGVR, existing approaches could be broadly grouped into the following two paradigms according to the capability they aim to strengthen: 
i) perception-enhanced methods, which improve fine-grained visual perception by promoting localized and comparative visual reasoning through prompt optimization~\cite{autosep} or task-specific post-training, such as supervised fine-tuning and reinforcement learning~\cite{fine-r1,dive-k}.
ii) knowledge-enhanced methods, which strengthen the cognition of subordinate categories by enriching MLLMs with fine-grained category supervision~\cite{FOCI}, detailed attribute knowledge~\cite{finedefics}, or hierarchical category structures~\cite{TARA}.

The major differences between existing MLLM-based FGVR approaches and our work are two-fold. 
On the one hand, existing methods mainly attribute limited FGVR performance to capability deficiency and accordingly strengthen either fine-grained perception or category cognition. 
In contrast, our work reveals an under-explored inference-time phenomenon termed FGVR Laziness, where MLLMs already possess the perceptual and conceptual capabilities required for FGVR yet fail to sufficiently utilize them.
On the other hand, most existing approaches acquire FGVR-oriented capabilities through task-specific post-training, whereas our work mitigates FGVR Laziness at inference time without additional task-specific training. 
Overall, our work suggests that latent capability elicitation at inference time could serve as a complementary approach to post-training-based capability acquisition for improving MLLM-based FGVR.

\subsection{Attention Bias in MLLMs}

Attention serves as an important mechanism for modeling token interactions and aggregating contextual information during inference, thereby influencing the relative contributions of different input components to model predictions.
Recent studies have shown that attention allocation varies with both input characteristics and the inference process, giving rise to various forms of attention bias.
Specifically, existing studies can be broadly grouped into the following two categories:
i) input-dependent attention bias, which arises from characteristics of the input, including position bias~\cite{multi-image-position}, excessive attention to specific visual tokens~\cite{visual-attention-sink}, and imbalanced attention between visual and textual modalities~\cite{seeing-clearly};
ii) inference-dependent attention bias, which emerges as inference proceeds, including progressively weakened visual attention during generation~\cite{sparc,seeing-far} and increasing dispersion of attention away from task-relevant visual regions during multi-step reasoning~\cite{deeper-thought}.

Among the above studies, input-dependent attention bias is most relevant to our work, as FGVR Laziness is sensitive to the recognition inputs, including the candidate list and recognition guidance.
Existing studies mainly focus on positional bias, which could partially explain the position-related observation in Fig.~\ref{fig:motivation}.
However, even when the target category and its position remain unchanged, merely replacing or removing the other candidates could substantially alter its category-to-image attention and even reverse the original prediction, revealing a candidate competition effect beyond positional bias in MLLM-based FGVR.
Furthermore, different from prior studies that mainly characterize attention biases, our work investigates their connection to FGVR Laziness.
Specifically, our empirical and theoretical analyses show that candidate lists and recognition guidance can jointly affect attention allocation and recognition results, which motivates us to refine these two factors for FGVR Laziness mitigation.

\section{Method}
\label{sec:method}

\begin{figure*}[t]
    \centering
    \includegraphics[width=\linewidth]{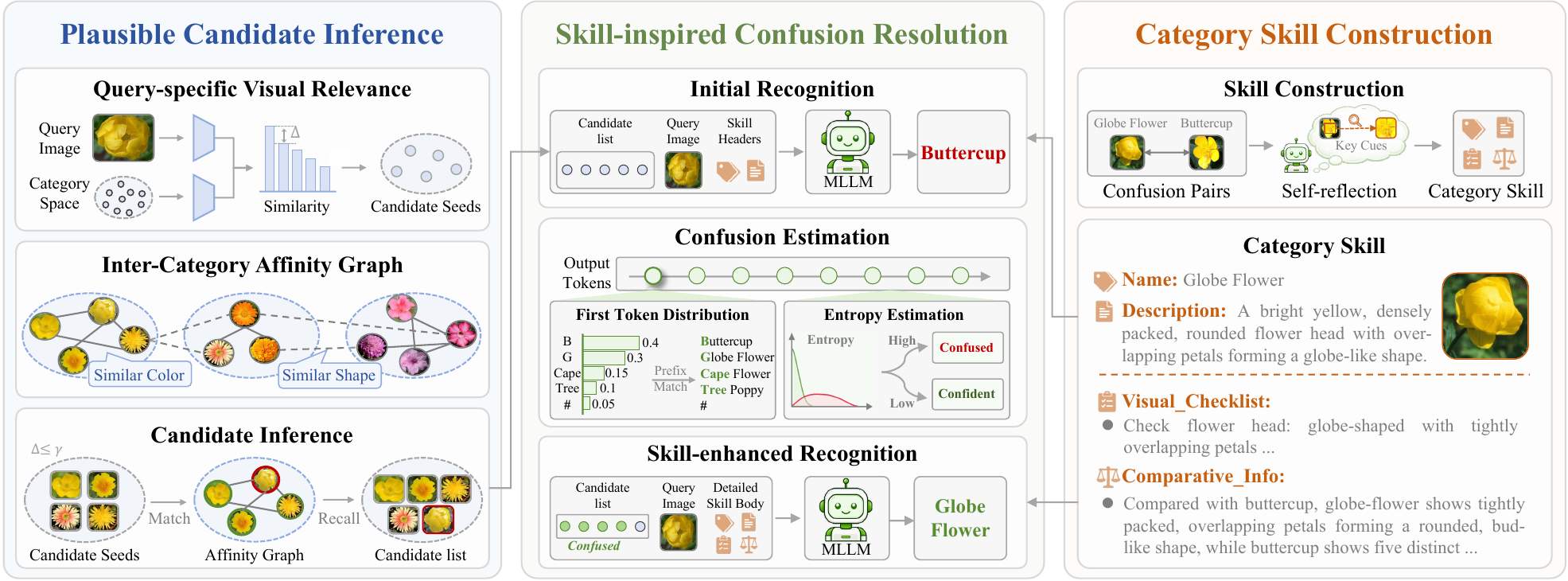}
\caption{
\textbf{Overview of the proposed LIVE.}
Based on the empirical observations and theoretical analyses, LIVE aims to mitigate FGVR Laziness by reducing candidate competition and providing adaptive recognition guidance.
Specifically, for a given query image, Plausible Candidate Inference (PCI) first combines query-specific visual relevance with a pre-constructed inter-category affinity graph to infer a compact candidate list with reliable ground-truth coverage.
Meanwhile, Category Skill Construction (CSC) leverages MLLM self-reflection to construct reusable Category Skills, consisting of lightweight headers and detailed visual and comparative guidance.
Based on the inferred candidates and Category Skills, Skill-inspired Confusion Resolution (SCR) performs initial recognition with skill headers and estimates prediction uncertainty from the output-token distribution.
For uncertain predictions, SCR further identifies confused categories through first-token prefix matching and adaptively invokes the corresponding detailed guidance for skill-enhanced recognition.
In the figure, for illustration, we present the LIVE pipeline in the image-based FGVR setting using examples from the Flower dataset.
}
    \label{fig:overview}
\end{figure*}

In this section, we elaborate on the proposed training-free framework dubbed Laziness-resIstant Visual rEcognition (LIVE) for mitigating FGVR Laziness at inference time.
As shown in Fig.~\ref{fig:overview}, LIVE consists of Plausible Candidate Inference (PCI) to construct a compact yet reliable candidate list, Category Skill Construction (CSC) to construct reusable recognition guidance, and Skill-inspired Confusion Resolution (SCR) to invoke relevant guidance on demand.
In the following, we first formulate FGVR Laziness and analyze potential mitigation strategies in Section~\ref{sec:problem_formulation}, then introduce the PCI module in Section~\ref{sec:candidate_inference}, and finally elaborate on the CSC and SCR modules in Sections~\ref{sec:category_skill} and~\ref{sec:skill_elicitation}, respectively.

\subsection{Problem Formulation}
\label{sec:problem_formulation}

Given a query image/video $x$, MLLM-based FGVR aims to identify its fine-grained category $\hat{y}$ from a candidate list $\mathbf{C}=[c_1,\ldots,c_M]$ via the MLLM $\mathcal{M}_{\Theta}$.
Mathematically, the above FGVR process could be formulated as
\begin{equation}
\hat{y}
=
\mathcal{M}_{\Theta}(x, \mathbf{C}, G),
\qquad
\hat{y}\in\mathbf{C},
\label{eq:fgvr_formulation}
\end{equation}
where $G$ denotes the task instructions that guide the  MLLM to distinguish among candidate categories by associating fine-grained visual cues in $x$ with the corresponding category concepts in $\mathbf{C}$.

As discussed in the Introduction, numerous FGVR errors produced by MLLMs could be corrected by adjusting candidate ordering, candidate composition, or recognition guidance.
These observations suggest that, despite being capable of perceiving fine-grained visual cues and understanding category concepts, MLLMs might fail to effectively associate the two under certain recognition conditions, thus leading to incorrect FGVR predictions.
Accordingly, we refer to the above insufficient association between available visual cues and category concepts as FGVR Laziness, and formally define it as follows.

\begin{definition}[FGVR Laziness]
\label{def:fgvr_laziness}
For a query sample $x$ with ground-truth category $y$ and an incorrect prediction $\hat{y}=\mathcal{M}_{\Theta}(x, \mathbf{C}, G)$ ($\hat{y}\neq y$),
an MLLM exhibits FGVR Laziness if there exist admissible variations $(\mathbf{C}',G')$ such that
\begin{equation}
y,\hat{y}\in\mathbf{C}',
\qquad
\mathcal{M}_{\Theta}(x, \mathbf{C}', G')=y.
\label{eq:fgvr_laziness}
\end{equation}
Here, admissible variations should preserve the recognition task without revealing the correct answer, while keeping the model and the query sample unchanged.
\end{definition}

Based on the above discussion and definition, we aim to mitigate FGVR Laziness by deriving desirable candidate lists $\mathbf{C}$ or recognition guidance $G$, thus facilitating the association between visual cues and category concepts.
To this end, we first investigate what properties of candidate lists and recognition guidance could strengthen such associations.
Since the association strength cannot be directly observed, we introduce the following assumption.

\begin{assumption}
\label{ass:attention_proxy}
The association strength between visual cues and the concept of a given category $c$ within an MLLM is positively correlated with category-to-image attention $R_{\Theta}(c\mid x,\mathbf{C},G)$, with $\Theta$ and $x$ fixed.
\end{assumption}

The detailed formulation of $R_{\Theta}(c\mid x,\mathbf{C},G)$ is provided in the Supplementary Material.
The rationale for the above assumption is twofold.
On the one hand, the attention mechanism captures interactions between visual and category representations, which could naturally reflect the associations between visual cues and category concepts.
On the other hand, our observations in Fig.~\ref{fig:motivation}(c) demonstrate that increased category-to-image attention consistently accompanies improved recognition of the ground-truth category.
Accordingly, we adopt category-to-image attention $R_{\Theta}(c\mid x,\mathbf{C},G)$ as a proxy for the association strength in the subsequent analysis.
To begin with, we investigate the role of the candidate list $\mathbf{C}$ by examining how candidate ordering and candidate competition affect category-to-image attention.
Specifically, we consider the causal attention setting where image tokens precede the candidate list and derive the following two theorems.

\begin{theorem}[Candidate Ordering Effect]
\label{thm:ordering}
Under causal attention with image tokens preceding the candidate list and attention logits to shared accessible tokens held fixed, moving a specified category $c$ forward across other candidates increases its category-to-image attention, namely,
\begin{equation}
R_{\Theta}(c\mid x,\mathbf{C}_{c\cdot},G)
>
R_{\Theta}(c\mid x,\mathbf{C}_{\cdot c},G),
\label{eq:ordering_effect}
\end{equation}
where $\mathbf{C}_{c\cdot}$ and $\mathbf{C}_{\cdot c}$ have the same candidate composition but place $c$ at earlier and later positions, respectively.
The relative order of other candidates is preserved, with $x$ and $G$ fixed.
\end{theorem}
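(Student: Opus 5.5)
I will make the category-to-image attention explicit, since its exact form is deferred to the Supplementary Material. For a query token $q$ belonging to category $c$, with causal attention, the accessible key set is $\mathcal{A}(q)=\mathcal{I}\cup\mathcal{P}\cup\mathcal{B}(q)$. Here $\mathcal{I}$ denotes the image tokens, $\mathcal{P}$ any prefix tokens (system prompt, parts of $G$ placed before the list), and $\mathcal{B}(q)$ the candidate tokens preceding $q$ in the list, including earlier tokens of $c$ itself. I will take
\begin{equation*}
R_{\Theta}(c\mid x,\mathbf{C},G)=\frac{1}{|c|}\sum_{q\in c}\frac{\sum_{j\in\mathcal{I}}\exp(s_{qj})}{\sum_{j\in\mathcal{A}(q)}\exp(s_{qj})},
\end{equation*}
averaged over heads and layers if needed, with $s_{qj}$ the attention logits. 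The hypothesis ``attention logits to shared accessible tokens held fixed'' means the following: for every key $j$ accessible to $q$ under both orderings, $s_{qj}$ takes the same value in $\mathbf{C}_{c\cdot}$ and $\mathbf{C}_{\cdot c}$. This abstracts away positional-encoding and representation drift. It applies to the image tokens, the prefix, the tokens of $c$ itself, and the candidates placed before $c$ in both lists.

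The key step is a set-inclusion argument. Moving $c$ forward across other candidates, while preserving the relative order of the rest, means every candidate preceding $c$ in $\mathbf{C}_{c\cdot}$ also precedes $c$ in $\mathbf{C}_{\cdot c}$. The candidates that $c$ jumped over are strictly added, so for each $q\in c$ we get $\mathcal{A}_{c\cdot}(q)\subsetneq\mathcal{A}_{\cdot c}(q)$. The extra set is $\Delta(q)$, the tokens of the jumped-over candidates together with any separators, and it is nonempty. Writing $N_q=\sum_{j\in\mathcal{I}}\exp(s_{qj})$ and $D_q=\sum_{j\in\mathcal{A}_{c\cdot}(q)}\exp(s_{qj})$, both unchanged by the fixed-logit hypothesis, the per-token ratio moves from $N_q/D_q$ to $N_q/(D_q+\sum_{j\in\Delta(q)}\exp(s_{qj}))$. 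Since exponentials are strictly positive and $N_q>0$, this is a strict decrease. Averaging over $q\in c$, and over heads and layers with nonnegative weights, preserves the strict inequality and gives \eqref{eq:ordering_effect}.

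The main obstacle is making the fixed-logit hypothesis precise enough that the comparison is legitimate without being vacuous. Under RoPE or in deeper layers, $s_{qj}$ changes when positions shift and when hidden states of $c$ absorb different context. I would state explicitly that the theorem compares the two softmax normalizations with the logits to commonly accessible keys frozen. This is a single-layer, first-order idealization, and the hypothesis is phrased exactly to license it. A second subtlety is whether the readout averages only over tokens of $c$ or also includes tokens after the list. For tokens after the list both orderings have identical accessible sets, so they contribute equally, and the strict inequality survives as long as at least one token of $c$ enters the average.
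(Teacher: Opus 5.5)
Your proposal is correct and follows essentially the same argument as the paper. Under the fixed-logit hypothesis, each token of $c$ has a strictly larger accessible key set under $\mathbf{C}_{\cdot c}$ than under $\mathbf{C}_{c\cdot}$; the jumped-over candidate tokens add strictly positive mass to the softmax denominator while the image numerator is unchanged, so every per-token image-attention ratio strictly decreases, and averaging over the tokens of $c$ yields the inequality.
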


\begin{theorem}[Candidate Competition Effect]
\label{thm:composition}
Under causal attention with image tokens preceding the candidate list and attention logits to shared accessible tokens held fixed, reducing attention competition from other candidates increases the category-to-image attention of a specified category $c$, namely,
\begin{equation}
R_{\Theta}(c\mid x,\mathbf{C}_{c+},G)
>
R_{\Theta}(c\mid x,\mathbf{C}_{c-},G),
\label{eq:composition_effect}
\end{equation}
where $\mathbf{C}_{c+}$ and $\mathbf{C}_{c-}$ impose weaker and stronger attention competition on $c$, respectively.
The position of $c$ is preserved, with $x$ and $G$ fixed.
\end{theorem}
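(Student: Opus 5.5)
The plan is to prove Theorem~\ref{thm:composition} by writing the category-to-image attention as a softmax ratio and showing that only the normalizer depends on the competing candidates. I will use the formulation of $R_{\Theta}(c\mid x,\mathbf{C},G)$ from the Supplementary Material, which I take to be the image-attention mass of the tokens of $c$, averaged over those tokens. For a token $t$ of $c$ (summed over heads and layers if needed), causal attention gives
\begin{equation*}
R_t=\frac{\sum_{v\in\mathcal{V}}\exp(s_{t,v})}{\sum_{v\in\mathcal{V}}\exp(s_{t,v})+\sum_{u\in\mathcal{A}_t\setminus\mathcal{V}}\exp(s_{t,u})},
\end{equation*}
where $\mathcal{V}$ is the set of image tokens and $\mathcal{A}_t$ is the set of tokens accessible to $t$.

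\textbf{Step 1: split the accessible set.} Since the image precedes the list, $\mathcal{A}_t$ splits into three groups: image tokens, the prefix shared by both lists (system prompt, $G$, and delimiters), and tokens of other candidates placed before $c$, together with the earlier tokens of $c$ itself. I will write $R_t=N/(N+S+K_t)$. Here $N$ is the image term, $S$ collects the shared non-candidate terms, and $K_t=\sum_{u\in\mathcal{P}_t}\exp(s_{t,u})$ is the competition mass of the preceding other-candidate tokens $\mathcal{P}_t$.

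\textbf{Step 2: invoke the hypotheses.} The position of $c$ is preserved, and logits to shared accessible tokens are held fixed. Therefore $N$ and $S$ are the same under $\mathbf{C}_{c+}$ and $\mathbf{C}_{c-}$. This is where the stated hypothesis does its work: image and instruction tokens are shared, so their logits from $c$'s tokens do not change.

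\textbf{Step 3: formalize weaker competition.} I will make ``weaker attention competition'' precise as $K_t(\mathbf{C}_{c+})<K_t(\mathbf{C}_{c-})$ for every token $t$ of $c$ (or at least in aggregate with non-strict inequality elsewhere). Removing preceding candidates, or replacing them with irrelevant ones whose logits $s_{t,u}$ are lower, yields exactly this. Since $f(K)=N/(N+S+K)$ is strictly decreasing in $K$ when $N>0$, each $R_t$ increases, and averaging preserves strict inequality, which gives \eqref{eq:composition_effect}.

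The main obstacle is justifying that $c$'s own token representations, and hence its queries, do not shift when the earlier context changes. The queries depend on the preceding candidates through lower layers, so ``logits to shared tokens held fixed'' is an idealization. I will state this explicitly as part of the hypothesis, applied layer-wise, rather than try to derive it. A second subtlety is that if competitors after $c$ change, they are invisible to $c$ under causal masking, giving $K_t$ equal and no strict gain. So I will define weaker competition through the preceding accessible set, and note that when the preceding competition mass is unchanged the inequality degenerates to equality.
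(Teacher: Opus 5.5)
Your proposal is correct and follows essentially the same route as the paper. The paper likewise splits each token's softmax denominator into a fixed image mass $Z_{\mathcal{I},t}$, a fixed shared mass $Z_{\mathcal{O},t}$, and a competition mass $Z_t^{\pm}$ from the preceding candidates. It formalizes weaker competition as $Z_t^{+}\le Z_t^{-}$ for all $t\in\mathcal{T}_c$, with strict inequality for at least one token, and concludes from the monotonicity of $Z\mapsto Z_{\mathcal{I},t}/(Z_{\mathcal{I},t}+Z_{\mathcal{O},t}+Z)$ and averaging over tokens.

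Two differences are only bookkeeping. The paper restricts to replacing preceding candidates with ones of equal token length, so that the position of $c$ is preserved, and treats removal in a separate remark after the proof. Also, the earlier tokens of $c$ itself, which you list in your third group, are neither in your $S$ nor in your $K_t$ as you define them; they should be counted in the fixed shared mass $S$.
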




In Theorem~\ref{thm:composition}, $\mathbf{C}_{c+}$ corresponds to replacing competing candidates with less related categories.
For example, in fine-grained bird recognition on CUB~\cite{cub}, competing bird categories can be replaced with aircraft categories from FGVC-Aircraft~\cite{aircraft}.
Clearly, directly removing these competing candidates would reduce the candidate space and maximize the attention gain.
Due to space limitations, the proofs of both theorems and the above claim are provided in the Supplementary Material.

Notably, although both candidate reordering and candidate reduction have an incentive to increase category-to-image attention, the two strategies incur different risks.
Specifically, since the ground-truth category $y$ is unknown at inference time, candidate reordering might shift $y$ backward, whereas candidate reduction might exclude $y$.
To quantify these potential adverse effects, we define risk as the probability that an intervention either excludes $y$ or decreases its category-to-image attention, leading to the following theorem.

\begin{theorem}[Degradation Risk of Candidate Interventions]
\label{thm:risk}
Consider a uniform distribution over the $M$ candidate positions.
Move-to-front reordering has an average degradation risk of $\frac{M-1}{2M}$ over all possible starting positions, whereas order-preserving candidate reduction has a risk of $1-p_{\mathrm{ret}}$.
Therefore, candidate reduction has lower degradation risk if and only if
\begin{equation}
p_{\mathrm{ret}}>\frac{1}{2}+\frac{1}{2M},
\label{eq:risk_comparison}
\end{equation}
where $p_{\mathrm{ret}}$ denotes the ground-truth retention probability.
\end{theorem}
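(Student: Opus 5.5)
We compute the two risks separately and then compare them. Throughout, let $k\in\{1,\ldots,M\}$ denote the position of the ground-truth category $y$ in the original list $\mathbf{C}$, with $k$ uniform over $\{1,\ldots,M\}$.

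\emph{Move-to-front reordering.} Without knowing $y$, this intervention moves some candidate $c^\ast$ to the front and shifts the candidates that preceded $c^\ast$ back by one position. Suppose $c^\ast$ sits at position $j$. Relative to $y$ there are three cases:
\begin{itemize}
\item If $c^\ast=y$, then $y$ moves forward. By Theorem~\ref{thm:ordering}, its category-to-image attention increases.
\item If $c^\ast\neq y$ and $j<k$, then $y$ keeps its position.
\item If $c^\ast\neq y$ and $j>k$, then $y$ is shifted backward across $c^\ast$. Theorem~\ref{thm:ordering}, applied in the reverse direction, gives a strict decrease in $R_{\Theta}(y\mid x,\cdot,G)$, so this is a degradation.
\end{itemize}
Under the natural reading of ``average over all possible starting positions'', degradation occurs exactly when the moved candidate starts after $y$. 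We therefore average the indicator of $\{j>k\}$ over pairs $(j,k)$ drawn uniformly from $\{1,\ldots,M\}^2$. Counting pairs gives
\[
\frac{1}{M^2}\sum_{k=1}^{M}(M-k)=\frac{M(M-1)/2}{M^2}=\frac{M-1}{2M}.
\]
The pairs with $j=k$ lie on the diagonal and correspond to moving $y$ itself, which is beneficial. The pairs with $j<k$ are neutral.

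\emph{Order-preserving reduction.} If $y$ is removed, the intervention fails by definition, and this happens with probability $1-p_{\mathrm{ret}}$. If $y$ is retained, the relative order is preserved, so $y$'s position can only stay the same or move forward. Its competitors also form a subset of the originals. Combining Theorem~\ref{thm:ordering} for the forward shift with Theorem~\ref{thm:composition} for the reduced competition, $R_\Theta(y\mid\cdot)$ does not decrease. The degradation risk is therefore exactly $1-p_{\mathrm{ret}}$.

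\emph{Comparison.} The condition $1-p_{\mathrm{ret}}<\frac{M-1}{2M}$ is equivalent to
\[
p_{\mathrm{ret}}>1-\frac{M-1}{2M}=\frac{1}{2}+\frac{1}{2M},
\]
which is \eqref{eq:risk_comparison}. This equivalence holds in both directions, giving the ``if and only if''.

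\emph{Main obstacle.} The main difficulty is pinning down the randomness model for move-to-front so that the count yields $\frac{M-1}{2M}$ rather than $\frac{M-1}{2M}\cdot\frac{M}{M-1}=\frac12$, which is what one gets by conditioning on $c^\ast\neq y$. I would state explicitly that $j$ and $k$ are independent and uniform, with $j=k$ allowed. A secondary subtlety is the case $k=1$ in the reduction analysis, where $y$ does not move forward and only the removal of competitors acts. I would handle it by invoking Theorem~\ref{thm:composition} with the position of $y$ preserved, treating weak non-decrease as ``no degradation''.
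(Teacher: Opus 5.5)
Your proof is correct and follows the paper's route: the paper also treats the moved position and the ground-truth position as independent and uniform (diagonal included), counts the backward-shift event to get $\frac{M-1}{2M}$, takes $1-p_{\mathrm{ret}}$ for reduction, and rearranges the inequality. The only difference is that the paper defines the degradation events directly as ``$y$ is shifted backward'' and ``$y$ is excluded'', so your appeal to Theorems~1 and~2 to rule out attention loss in the neutral and retained cases is extra justification rather than a required step.
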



Since FGVR typically involves a large number of candidate categories, the retention threshold in Theorem~\ref{thm:risk} approaches $50\%$ as $M$ increases.
Moreover, for a fixed $M$, the average degradation risk of move-to-front reordering is fixed under the above setting, whereas the risk of candidate reduction decreases as the ground-truth retention probability increases.
Accordingly, we focus on candidate reduction and develop Plausible Candidate Inference (PCI) to construct a compact candidate list while maintaining reliable ground-truth coverage.

As for recognition guidance $G$, the observations in Fig.~\ref{fig:motivation} show that adjusting $G$ can correct predictions while keeping the candidate list unchanged.
However, the guidance needed for effective recognition varies across queries with different levels of recognition difficulty and category confusion.
Specifically, some queries can be resolved with basic category information, whereas others require more detailed visual and comparative cues.
This calls for guidance that supports different levels of detail and can be invoked adaptively according to query-specific recognition difficulty and category confusion.
To this end, we further develop Category Skill Construction (CSC) and Skill-inspired Confusion Resolution (SCR) to provide adaptive recognition guidance.

\subsection{Plausible Candidate Inference}
\label{sec:candidate_inference}

The above analysis shows that candidate reduction could mitigate FGVR Laziness as long as the ground-truth category can be reliably retained.
This motivates us to construct a compact yet reliable candidate list.
A straightforward strategy is to retain the Top-$k$ categories according to their visual relevance to the query image.
However, fine-grained categories often exhibit subtle visual differences, and the ground-truth category might rank below visually similar categories.
Consequently, a small $k$ might exclude the ground truth, whereas a large $k$ retains excessive competing categories.
To address this dilemma, PCI combines query-specific visual relevance with inter-category affinity to perform query-adaptive candidate inference.

Specifically, PCI first employs an off-the-shelf vision-language model (VLM) to estimate the visual relevance of each category $c_i$ to a query sample $x$ as
\begin{equation}
\label{eq:visual_relevance}
s_i(x)
=
\operatorname{sim}
\left(
f_v(x),f_t(c_i)
\right),
\end{equation}
where $f_v(\cdot)$ and $f_t(\cdot)$ denote the visual and textual encoders of the VLM, respectively, and $\operatorname{sim}(\cdot,\cdot)$ denotes the model-specific image-text similarity score, such as cosine similarity for CLIP.
We rank all categories in descending order of $s_i(x)$, with $c_{(k)}(x)$ and $s_{(k)}(x)$ denoting the $k$-th ranked category and its relevance score, respectively.

However, query-specific visual relevance alone might be unreliable when fine-grained categories are visually similar.
To compensate for this limitation, PCI further constructs an inter-category affinity graph to capture the confusion relations among these visually similar categories.
In the affinity graph, each category is represented as a node, and the directed edge from $c_i$ to $c_j$ reflects how strongly $c_i$ tends to be confused with $c_j$.
Specifically, for each category $c_i$, we collect a set of reference samples $\mathcal{R}_i=\{x_i^1,\ldots,x_i^{N}\}$, where $N$ is fixed to $4$ in our experiments.
For each reference image $x_i^r\in\mathcal{R}_i$, PCI ranks all categories according to Eq.~\ref{eq:visual_relevance} and denotes the Top-$B$ categories as $\mathbf{C}_i^r$.
The affinity from $c_i$ to $c_j$ is then defined as
\begin{equation}
\label{eq:inter_category_affinity}
\operatorname{Aff}(c_i,c_j)
=
\frac{1}{N}
\sum_{r=1}^{N}
\mathbb{I}
\left[
c_j \in \mathbf{C}^{r}_{i}
\right],
\end{equation}
where $\mathbb{I}[\cdot]$ denotes the indicator function, taking the value $1$ if the condition is satisfied and $0$ otherwise.

Thanks to the relevance ranking and affinity graph, PCI could select a small number of highly-ranked categories as seeds and supplement them with confusing categories identified from the affinity graph.
The former controls the candidate size, while the latter improves ground-truth coverage, jointly enabling compact yet reliable candidate inference.
Instead of using a fixed number of seeds for all queries, it is more desirable to adaptively adjust it according to the reliability of the relevance ranking.
To quantify this reliability, PCI uses the gap between the two top-ranked categories:
\begin{equation}
\label{eq:visual_margin}
\delta(x)
=
s_{(1)}(x)-s_{(2)}(x).
\end{equation}
As supported by our empirical analysis in Fig.~\ref{fig:pci_gap} of the Supplementary Material, larger relevance gaps generally indicate more reliable rankings, whereas smaller gaps indicate less reliable ones.
Accordingly, PCI adopts different candidate inference strategies based on $\delta(x)$.

When $\delta(x)>\gamma$, the relevance ranking is considered relatively reliable, and PCI selects the Top-1 category $c_{(1)}(x)$ as the only seed.
After that, PCI supplements the seed with categories that are strongly confused with $c_{(1)}(x)$ in the affinity graph.
Specifically, a category $c_i$ is recalled if $c_{(1)}(x)$ appears in the Top-$B$ lists of all reference samples in $\mathcal{R}_i$, \textit{i.e.}, $\operatorname{Aff}(c_i,c_{(1)}(x))=1$.
Clearly, this condition indicates that $c_i$ is highly confused with $c_{(1)}(x)$, making it a plausible ground-truth category despite the high reliability of the current relevance ranking.

When $\delta(x)\leq\gamma$, the relevance ranking is less reliable, making the Top-1 category alone insufficient as the seed.
For simplicity, PCI adopts the same Top-$B$ setting used for affinity graph construction and takes the Top-$B$ ranked categories as seeds to retrieve plausible categories from the affinity graph.
Specifically, a category $c_i$ is regarded as potentially confusing if it receives affinity support from at least a proportion $\rho$ of the seeds:
\begin{equation}
\label{eq:multi_seed_support}
\frac{1}{B}
\sum_{j=1}^{B}
\mathbb{I}
\left[
\operatorname{Aff}(c_i,c_{(j)}(x))>0
\right]
\geq
\rho.
\end{equation}
For each confusing category $c_i$, PCI further aggregates its affinity to all seeds as
\begin{equation}
\label{eq:affinity_score}
q_i(x)
=
\sum_{j=1}^{B}
\operatorname{Aff}(c_i,c_{(j)}(x)).
\end{equation}
After that, PCI ranks the confusing categories in descending order of $q_i(x)$ and recalls the highest-scoring ones subject to the total candidate budget $K$.

Finally, the resulting candidate list for query $x$ is obtained as
\begin{equation}
\label{eq:low_conf_candidate_list}
\mathbf{C}_x
=
\begin{cases}
[c_{(1)}(x)] \cup \mathbf{C}_x^{\mathrm{H}},
& \delta(x)>\gamma,\\
[c_{(1)}(x),\ldots,c_{(B)}(x)] \cup \mathbf{C}_x^{\mathrm{L}},
& \delta(x)\leq\gamma,
\end{cases}
\end{equation}
where $\mathbf{C}_x^{\mathrm{H}}$ and $\mathbf{C}_x^{\mathrm{L}}$ denote the additional confusing categories recalled under the two conditions, respectively.
Through the above adaptive candidate inference, PCI could significantly reduce category competition while guaranteeing reliable ground-truth coverage.
As shown in Fig.~\ref{fig:cci_analysis}, PCI consistently achieves a ground-truth retention rate above $90\%$ across the evaluated settings, supporting its effectiveness as a low-risk strategy for mitigating FGVR Laziness.

\subsection{Category Skill Construction}
\label{sec:category_skill}

As discussed in Section~\ref{sec:problem_formulation}, it is non-trivial to construct the desirable FGVR guidance.
On the one hand, category confusion varies across queries, making fixed guidance insufficiently targeted. 
On the other hand, constructing tailored guidance from scratch for every query would require repeated inference-time generation.
Recall that recent studies on LLM agents have introduced Agent Skills~\cite{zhang2025agentskills} to encapsulate task-solving knowledge in a hierarchical form, thereby enabling on-demand reuse across tasks~\cite{skilldreamer}. Inspired by this paradigm, CSC formulates fine-grained category knowledge as Category Skills, enabling targeted recognition guidance to be flexibly composed and selectively invoked for different queries.
To the best of our knowledge, this work could be one of the first attempts to introduce the skill paradigm into MLLM-based FGVR.

Following the hierarchical organization of Agent Skills, each Category Skill consists of a lightweight skill header and a detailed skill body.
As shown in Fig.~\ref{fig:overview}, the skill header contains the category name and a concise description, while the skill body contains sophisticated FGVR experience that can be invoked when finer discrimination is needed.
Specifically, the skill body consists of two complementary types of knowledge, namely, category-specific visual characteristics and inter-category comparative knowledge.

To begin with, CSC constructs its category-specific visual knowledge by eliciting the category knowledge rooted in the MLLM and grounding it with a randomly-selected reference image.
Given category $c_i$ and a reference image $x_i^r\in\mathcal{R}_i$, the MLLM generates a concise category description $d_i$ and a visual checklist $\mathbf{v}_i$ using the instruction $q_{\mathrm{cat}}$, which is detailed in the Supplementary Material.
Mathematically,
\begin{equation}
\label{eq:category_visual_skill}
(d_i,\mathbf{v}_i)
=
\mathcal{M}_{\Theta}
\left(
q_{\mathrm{cat}}(c_i,x_i^r)
\right).
\end{equation}
The description $d_i$ summarizes the characteristic appearance of $c_i$ and forms the skill header together with the category name.
The visual checklist $\mathbf{v}_i$ specifies the visual regions and fine-grained attributes to examine, helping the model associate relevant visual cues with corresponding category concepts during recognition.

However, category-specific visual characteristics alone might be insufficient when visually similar categories share many common attributes.
As a remedy, CSC further constructs inter-category comparative knowledge from erroneous MLLM predictions.
Specifically, for each category $c_i$, CSC evaluates its reference samples under the full candidate list $\mathbf{C}$ and collects all incorrectly predicted categories into a confusion set:
\begin{equation}
\label{eq:confusion_set}
\mathcal{Q}_i
=
\left\{
\hat{y}_{\mathbf{C}}(x_i^r)
\mid
x_i^r\in\mathcal{R}_i,\;
\hat{y}_{\mathbf{C}}(x_i^r)\neq c_i
\right\},
\end{equation}
where $\hat{y}_{\mathbf{C}}(x_i^r)$ denotes the MLLM prediction for $x_i^r$ under the full candidate list $\mathbf{C}$.

For each category $c_j\in\mathcal{Q}_i$, CSC uses the instruction $q_{\mathrm{cmp}}$ to guide the MLLM to self-reflect on the confusion between $c_i$ and $c_j$ and identify their discriminative visual cues:
\begin{equation}
\label{eq:comparative_skill}
r_{ij}
=
\mathcal{M}_{\Theta}
\left(
q_{\mathrm{cmp}}(c_i,c_j,x_i^r,x_j^{r'})
\right),
\qquad
c_j\in\mathcal{Q}_i,
\end{equation}
where $x_i^r\in\mathcal{R}_i$ and $x_j^{r'}\in\mathcal{R}_j$ are randomly selected reference samples, and $r_{ij}$ summarizes the discriminative visual differences between $c_i$ and $c_j$.
The comparative knowledge associated with $c_i$ is then collected as $\mathbf{r}_i=\{r_{ij}\}_{c_j\in\mathcal{Q}_i}$.

Finally, the resulting Category Skill for $c_i$ is organized as
\begin{equation}
\label{eq:category_skill}
\mathcal{S}_i
=
\left(
\underbrace{(c_i,d_i)}_{\text{skill header}},
\underbrace{
\left(
\mathbf{v}_i,
\mathbf{r}_i
\right)
}_{\text{skill body}}
\right).
\end{equation}

\subsection{Skill-inspired Confusion Resolution}
\label{sec:skill_elicitation}

Given the candidate list inferred by PCI and the Category Skills constructed by CSC, the remaining question is how to leverage these skills to provide targeted recognition guidance.
A straightforward strategy is to invoke all candidate Category Skills during recognition.
However, our empirical results in Table~\ref{tab:scr_analysis} indicate that such a strategy is not optimal and might even introduce excessive cues that interfere with recognition.
In fact, the key to effective skill utilization is determining when detailed guidance is needed and which Category Skills should be invoked.
To this end, SCR uses prediction uncertainty to determine whether further guidance is necessary and category confusion to determine which skills to invoke.

Specifically, SCR first performs initial recognition using the general recognition instruction together with only the skill headers of the candidate categories.
Accordingly, the initial recognition guidance is constructed as
\begin{equation}
\label{eq:stage1_guidance}
G_x^{(1)}
=
G_g \cup
\left\{
(c_i,d_i)
\mid
c_i\in\mathbf{C}_x
\right\},
\end{equation}
where $G_g$ denotes the general recognition instruction.
Based on $G_x^{(1)}$, the initial prediction is obtained as
\begin{equation}
\label{eq:stage1_recognition}
\hat{y}_x^{(1)}
=
\mathcal{M}_{\Theta}
\left(
x,\mathbf{C}_x,G_x^{(1)}
\right).
\end{equation}

Based on the initial recognition, SCR determines whether detailed skill bodies are needed by estimating prediction uncertainty from the output-token distribution.
Let $p_{\ell}(v)$ and $L_x$ denote the probability of token $v$ at the $\ell$-th decoding position and the number of tokens in the generated category name respectively. SCR computes prediction uncertainty as follows:
\begin{equation}
\label{eq:answer_entropy}
H_x
=
-\frac{1}{L_x}
\sum_{\ell=1}^{L_x}
\sum_{v\in\mathcal{V}}
p_{\ell}(v)
\log p_{\ell}(v),
\end{equation}
where $\mathcal{V}$ denotes the vocabulary of the MLLM.
As shown in Fig.~\ref{fig:answer_entropy}, correct predictions generally exhibit lower entropy than incorrect ones, indicating that $H_x$ can effectively characterize prediction uncertainty.
Based on this observation, SCR introduces an uncertainty threshold $\eta$.
When $H_x\leq\eta$, the initial prediction $\hat{y}_x^{(1)}$ is directly regarded as the final result.
Otherwise, SCR further identifies the confused categories and invokes their corresponding skill bodies on demand.

Specifically, for an uncertain prediction ($H_x > \eta$), SCR identifies confused categories from the first-token distribution.
Since the output is constrained to a candidate category name, high-probability first tokens reflect competing category predictions and can be mapped back to candidate categories through prefix matching.
Therefore, SCR ranks the tokens according to $p_1(v)$ in descending order and retains candidate-matched tokens until the first unmatched token is encountered.
Let $\mathcal{T}_x^{+}$ denote the retained tokens. The corresponding confused categories are obtained as
\begin{equation}
\label{eq:focus_candidates}
\mathbf{C}_x^{\mathrm{conf}}
=
\bigcup_{t\in\mathcal{T}_x^{+}}
\left\{
c_i\in\mathbf{C}_x
\mid
\operatorname{PrefixMatch}(t,c_i)=1
\right\},
\end{equation}
where $\operatorname{PrefixMatch}(t,c_i)$ indicates whether token $t$ matches the prefix of category name $c_i$.
For example, as illustrated in Fig.~\ref{fig:overview}, the tokens ``cape'' and ``tree'' are matched to ``cape flower'' and ``tree poppy'', respectively.
Then, SCR invokes the detailed skill bodies of the confused categories and constructs the skill-enhanced recognition guidance as
\begin{equation}
\label{eq:stage2_guidance}
G_x^{(2)}
=
G_x^{(1)}
\cup
\left\{
\left(
\mathbf{v}_i,
\mathbf{r}_i
\right)
\mid
c_i\in\mathbf{C}_x^{\mathrm{conf}}
\right\}.
\end{equation}

Finally, the adaptive skill-enhanced prediction is obtained as
\begin{equation}
\label{eq:stage2_recognition}
\hat{y}_x
=
\begin{cases}
\mathcal{M}_{\Theta}\!\left(x,\mathbf{C}_x,G_x^{(1)}\right),
& H_x \leq \eta,\\
\mathcal{M}_{\Theta}\!\left(x,\mathbf{C}_x,G_x^{(2)}\right),
& H_x > \eta.
\end{cases}
\end{equation}

In this way, LIVE jointly implements candidate reduction and adaptive recognition guidance within the theoretical framework developed in Section~\ref{sec:problem_formulation}, thereby mitigating FGVR Laziness.

\section{Experiments}
\label{sec:experiments}

In this section, we conduct comprehensive experiments to evaluate the effectiveness and generalizability of the proposed LIVE.
The section is organized as follows.
In Section~\ref{sec:exp_settings}, we first introduce the experimental settings.
In Section~\ref{sec:exp_fgvc}, we compare LIVE with competitive methods on nine fine-grained image recognition benchmarks to verify its effectiveness on FGVR.
In Sections~\ref{sec:exp_action_recognition} and~\ref{sec:exp_object_recognition}, we further evaluate LIVE on video-based fine-grained action recognition and image recognition under both general and OOD settings, to investigate its generalizability across different recognition scenarios.
In Section~\ref{sec:exp_analysis}, we conduct ablation studies and additional analyses to further evaluate the proposed method.

\newcommand{\pcitag}{\raisebox{0.1ex}{\scalebox{0.72}{+PCI}}}

\begin{table*}[t]
\centering
\caption{
Comparisons with representative methods on nine fine-grained image recognition benchmarks.
Top-1 accuracy (\%) is reported, with the best and second-best results marked in \textbf{bold} and \underline{underlined}, respectively.
``$+\mathrm{PCI}$'' denotes augmenting the corresponding post-trained MLLM with the CLIP-based PCI module.
Results of the compared methods are taken from their original papers or reproduced using publicly available code.
}
\label{tab:main_image_results}
\scriptsize
\setlength{\tabcolsep}{3.0pt}
\renewcommand{\arraystretch}{1.12}
\newcommand{\gain}[1]{{\scriptsize #1}}
\resizebox{0.96\textwidth}{!}{
\begin{tabular}{@{}l c c c c c c c c c c c@{}}
\toprule[0.8pt]
Method & Source & Flower & CUB & Pet & Aircraft & Cars & Dogs
& DTD & Food & SUN & Avg. \\
\midrule

\rowcolor{black!8}\multicolumn{12}{@{}l}{\textbf{VLM-based Methods}} \\

CLIP RN50
& ICML 2021
& 61.18 & 46.08 & 83.59 & 15.60 & 53.88 & 54.56
& 38.72 & 78.62 & 58.71 & 54.55 \\

CLIP ViT-B/16
& ICML 2021
& 67.63 & 54.98 & 88.22 & 23.82 & 63.87 & 62.55
& 43.19 & 88.22 & 62.92 & 61.71 \\

SigLIP
& ICCV 2023
& 83.60 & 68.03 & 91.93 & 37.14 & \textbf{90.58} & \underline{74.98}
& 61.64 & 91.68 & 68.21 & 74.20 \\

DCLIP
& ICLR 2023
& 70.66 & 57.75 & 86.92 & 25.08 & 63.21 & 63.08
& 45.59 & 88.50 & 66.89 & 63.08 \\

CuPL
& ICCV 2023
& 73.31 & 56.77 & 91.20 & 27.93 & 66.12 & 66.78
& 54.26 & 88.85 & 68.99 & 66.02 \\

MPVR
& ECCV 2024
& 76.86 & 58.99 & 89.92 & 28.02 & 65.39 & 68.81
& 57.27 & 86.42 & 69.00 & 66.74 \\

ProAPO
& CVPR 2025
& 82.70 & 59.00 & \underline{92.70} & 28.30 & 67.70 & 67.56
& 55.80 & 89.50 & 68.90 & 68.02 \\

SCAN
& AAAI 2026
& 79.88 & 66.70 & 89.88 & 44.01 & 79.21 & 71.99
& 61.33 & 89.46 & 73.08 & 72.84 \\

\midrule
\rowcolor{black!8}\multicolumn{12}{@{}l}{\textbf{MLLM-based Methods}} \\

\multicolumn{12}{c}{\textit{General MLLMs}} \\

LLaVA-OneVision-7B
& arXiv 2024
& 33.69 & 19.05 & 65.49 & \underline{50.55} & 74.92 & 38.51
& 69.94 & 76.16 & 59.55 & 54.21 \\

InternVL3-8B
& arXiv 2025
& 38.60 & 24.71 & 65.63 & 20.52 & 31.30 & 33.08
& 61.91 & 70.00 & 58.69 & 44.94 \\

InternVL3.5-4B
& arXiv 2025
& 24.86 & 13.44 & 47.80 & 14.82 & 19.42 & 16.20
& 51.17 & 53.96 & 57.99 & 33.30 \\

InternVL3.5-8B
& arXiv 2025
& 27.97 & 15.51 & 55.73 & 15.51 & 22.53 & 21.09
& 57.07 & 57.63 & 59.24 & 36.92 \\

Qwen2.5-VL-7B
& arXiv 2025
& 71.65 & 40.66 & 86.15 & 45.30 & 60.70 & 61.65
& 62.07 & 83.94 & 62.57 & 63.85 \\

Qwen2.5-VL-32B
& arXiv 2025
& 78.71 & 51.65 & 86.67 & 44.40 & 74.50 & 62.82
& 61.96 & 81.90 & 68.02 & 67.85 \\

Qwen3-VL-8B
& arXiv 2025
& 80.79 & 44.92 & 87.08 & 38.58 & 69.44 & 59.34
& 67.97 & 87.37 & 70.74 & 67.36 \\

\midrule
\multicolumn{12}{c}{\textit{Post-trained MLLMs}} \\

Finedefics
& ICLR 2025
& 47.53 & 30.51 & 71.70 & 28.68 & 52.31 & 43.89
& 19.41 &  53.30 & 56.36 & 44.85 \\

Finedefics \pcitag
& -
& 77.67 & 50.41 & 89.75 & 39.21 & 80.16  & 66.59
& 48.46 & 87.45 & 66.82 & 67.39 \\

Fine-R1-7B
& ICLR 2026
& 71.40 & 46.94 & 85.33 & 46.08 & 57.49 & 61.09
& 49.78 & 78.77 & 59.11 & 61.78 \\

Fine-R1-7B \pcitag
& -
& 79.51 & 61.98 & 91.36 & 40.56 & 68.13 & 70.51
& 58.51 & 87.79 & 71.64 & 70.00 \\

DiVE-k
& ICLR 2026
& 71.90 & 55.37 & 87.84 & 46.56 & 59.87 & 69.91
& 56.44 & 79.79 & 60.79 & 65.39 \\

DiVE-k \pcitag
& -
& 85.18 & 64.56 & 91.63 & 49.38 & 82.35 & 74.11
& 63.98 & 87.32 & 70.80 & 74.37 \\

\midrule

\rowcolor{black!8}\multicolumn{12}{c}{Qwen2.5-VL-7B} \\

LIVE$_c$
& 
& 86.19 & 64.98 & 92.01 & 46.02 & 77.81 & 72.84
& 64.78 & 90.18 & 75.27 & 74.45 \\

$\Delta$ vs. Qwen2.5-VL-7B
& 
& \gain{+14.54} & \gain{+24.32} & \gain{+5.86}
& \gain{+0.72} & \gain{+17.11} & \gain{+11.19}
& \gain{+2.71} & \gain{+6.24} & \gain{+12.70}
& \gain{+10.60} \\

LIVE$_s$
& 
& 90.09 & \textbf{70.17} & 92.28 & \textbf{53.28}
& \underline{88.78} & \textbf{76.80}
& \underline{70.15} & \underline{91.73} & \textbf{76.02}
& \underline{78.81} \\

$\Delta$ vs. Qwen2.5-VL-7B
& 
& \gain{+18.44} & \gain{+29.51} & \gain{+6.13}
& \gain{+7.98} & \gain{+28.08} & \gain{+15.15}
& \gain{+8.08} & \gain{+7.79} & \gain{+13.45}
& \gain{+14.96} \\

\addlinespace[2pt]

\rowcolor{black!8}\multicolumn{12}{c}{Qwen3-VL-8B} \\

LIVE$_c$
& 
& \underline{90.29} & 65.29 & 92.25 & 43.11 & 80.47 & 70.95
& 69.04 & 91.38 & 74.76 & 75.28 \\

$\Delta$ vs. Qwen3-VL-8B
& 
& \gain{+9.50} & \gain{+20.37} & \gain{+5.17}
& \gain{+4.53} & \gain{+11.03} & \gain{+11.61}
& \gain{+1.07} & \gain{+4.01} & \gain{+4.02}
& \gain{+7.92} \\

LIVE$_s$
& 
& \textbf{93.59} & \underline{70.07} & \textbf{92.96}
& 50.52 & 88.43 & 74.84
& \textbf{72.34} & \textbf{92.37} & \underline{75.91}
& \textbf{79.00} \\

$\Delta$ vs. Qwen3-VL-8B
& 
& \gain{+12.80} & \gain{+25.15} & \gain{+5.88}
& \gain{+11.94} & \gain{+18.99} & \gain{+15.50}
& \gain{+4.37} & \gain{+5.00} & \gain{+5.17}
& \gain{+11.64} \\

\bottomrule[0.8pt]
\end{tabular}
}
\end{table*}

\subsection{Experimental Settings}
\label{sec:exp_settings}

\textbf{Datasets.}
We conduct experiments on 13 widely used benchmarks covering three representative recognition scenarios.
For fine-grained image recognition, we adopt nine FGVR benchmarks, including Flowers102~\cite{flower}, CUB200~\cite{cub}, Pet37~\cite{pet}, FGVC-Aircraft~\cite{aircraft}, Cars196~\cite{car}, Dogs120~\cite{dog}, DTD~\cite{dtd}, Food101~\cite{food}, and SUN397~\cite{sun}.
These datasets cover diverse domains such as plants, animals, vehicles, textures, food, and scenes.
For video-based fine-grained action recognition, we employ UCF101~\cite{ucf101} and HMDB51~\cite{hmdb51}.
For general and OOD image recognition, we further evaluate LIVE on ImageNet-1K~\cite{imagenet-1k} and ImageNet-Sketch~\cite{imagenet-sketch}, respectively.
Additional details of these datasets are provided in the Supplementary Material.

\textbf{Implementation Details.}
As a training-free framework, LIVE can be readily integrated with different MLLMs.
For comprehensive evaluations, we implement LIVE with Qwen2.5-VL-7B~\cite{qwen2.5-vl} and Qwen3-VL-8B~\cite{qwen3-vl}.
Meanwhile, for visual relevance computation in PCI (Eq.~\ref{eq:visual_relevance}), we employ CLIP ViT-B/16~\cite{clip} and SigLIP-base-patch16-224~\cite{siglip} as alternative VLM backbones, with the corresponding variants denoted as $\mathrm{LIVE}_c$ and $\mathrm{LIVE}_s$, respectively.
Accordingly, the threshold $\gamma$ in Eq.~\ref{eq:low_conf_candidate_list} is set to $0.035$ and $2.5$ for CLIP and SigLIP, respectively.
Moreover, the number of seeds $B$, the threshold $\rho$, and the candidate budget $K$ in PCI are set to $5$, $0.6$, and $20$, respectively.
For SCR, the entropy threshold $\eta$ is set to $0.1$ and $0.01$ for Qwen2.5-VL-7B and Qwen3-VL-8B, respectively.
Notably, the above hyperparameters are fixed across all datasets.
More implementation details and prompt templates are provided in the Supplementary Material.

\textbf{Evaluation Protocol.}
We follow the standard data split of each dataset and report Top-1 accuracy for all experiments.
Unless otherwise specified, we adopt a 4-shot reference setting (\textit{i.e.}, $N=4$), where four samples per category are selected from the training split to construct the affinity graph in PCI and Category Skills in CSC.
All evaluations are performed on the corresponding test split.
For video datasets, following CLIP~\cite{clip}, we use the middle frame of each video as input to the VLM while providing the complete video to the MLLM.

\subsection{Comparisons on Fine-Grained Image Recognition}
\label{sec:exp_fgvc}

In this section, we evaluate LIVE on nine widely used fine-grained image recognition benchmarks and compare it with representative methods.
Specifically, the compared methods are grouped into two categories:
i) VLM-based methods, including CLIP~\cite{clip}, SigLIP~\cite{siglip}, DCLIP~\cite{dclip}, CuPL~\cite{cupl}, MPVR~\cite{mpvr}, ProAPO~\cite{proapo}, and SCAN~\cite{scan}, covering both vanilla image-text matching and LLM-enhanced recognition;
ii) MLLM-based methods, including general MLLMs from the LLaVA-OneVision~\cite{llava-onevision}, InternVL~\cite{internvl3,internvl3-5}, and Qwen-VL~\cite{qwen2.5-vl,qwen3-vl} families, together with FGVR-oriented post-trained models, such as Finedefics~\cite{finedefics}, Fine-R1~\cite{fine-r1}, and DiVE-k~\cite{dive-k}.

For fair comparisons, we use the results reported in the original papers whenever available and reproduce the remaining baselines using their official implementations and recommended settings.
Unless otherwise specified, all baselines are evaluated over the full category list (\textit{i.e.}, $\mathbf{C}$).
For Finedefics, as overly long candidate lists impair instruction following, we restrict each list to at most $50$ categories, including the ground-truth category and up to $49$ randomly sampled alternatives.
Moreover, since our approach to alleviating FGVR Laziness is orthogonal to FGVR-oriented post-training, we further endow Finedefics, Fine-R1, and DiVE-k with the PCI module.
The corresponding variants are denoted by ``$+\mathrm{PCI}$''.

From the results in Table~\ref{tab:main_image_results}, one could have the following observations and conclusions:
\textbf{i) General-purpose MLLMs still exhibit limited FGVR performance despite strong underlying visual backbones.}
For example, LLaVA-OneVision-7B, which builds upon a SigLIP-family vision encoder, achieves only 54.21\% average accuracy, substantially lower than the 74.20\% of SigLIP.
Even the strongest general MLLM, Qwen2.5-VL-32B, reaches only 67.85\%, remaining 6.35 points below SigLIP.
Such performance gaps highlight the necessity of further eliciting the FGVR capability of general-purpose MLLMs.
\textbf{ii) FGVR-specific post-training brings notable gains on post-training domains, yet the acquired capability generalizes poorly to unseen domains.}
For example, Fine-R1 and DiVE-k, both built upon Qwen2.5-VL-7B, achieve clear improvements on the domains involved in post-training, yet consistently underperform the base model on the unseen DTD, Food-101, and SUN397 datasets.
\textbf{iii) FGVR-oriented post-trained MLLMs consistently benefit from PCI.}
Specifically, Finedefics, Fine-R1, and DiVE-k achieve further average gains of 22.54, 8.22, and 8.98 points with PCI, respectively.
These consistent improvements indicate that reducing category competition remains beneficial even after FGVR-oriented post-training.
\textbf{iv) LIVE consistently achieves substantial FGVR gains across different MLLM backbones and benchmarks.}
Specifically, $\mathrm{LIVE}_s$ improves Qwen2.5-VL-7B and Qwen3-VL-8B by 14.96 and 11.64 points on average, respectively.
These results further highlight inference-time FGVR Laziness mitigation as a promising avenue for improving MLLM-based FGVR.

\subsection{Comparisons on Fine-Grained Action Recognition}
\label{sec:exp_action_recognition}

\begin{table}[t]
\centering
\caption{
Comparisons with representative methods on human action recognition.
Top-1 accuracy (\%) is reported on UCF101 and HMDB51 datasets, with the best results marked in \textbf{bold}.
}
\label{tab:action_results}
\scriptsize
\setlength{\tabcolsep}{3.0pt}
\renewcommand{\arraystretch}{1.12}
\resizebox{\columnwidth}{!}{
\begin{tabular}{@{}l c c c c@{}}
\toprule[0.8pt]
Method & Source & UCF101 & HMDB51 & Avg. \\

\midrule
\rowcolor{black!8}\multicolumn{5}{c}{\textit{General VLMs}} \\
CLIP RN50 & ICML 2021 & 62.72 & 37.77 & 50.25 \\
CLIP ViT-B/16 & ICML 2021 & 67.88 & 47.25 & 57.57 \\
SigLIP & ICCV 2023 & 74.86 & 46.14 & 60.50 \\
SigLIP2 & arXiv 2025 & 73.24 & 45.29 & 59.27 \\

\midrule
\rowcolor{black!8}\multicolumn{5}{c}{\textit{General MLLMs}} \\
Qwen2.5-VL-7B & arXiv 2025 & 79.54 & 56.93 & 68.24 \\
Qwen2.5-VL-32B & arXiv 2025 & 75.79 & 54.77 & 65.28 \\
Qwen3-VL-8B & arXiv 2025 & 84.54 & 63.92 & 74.23 \\

\midrule
\rowcolor{black!8}\multicolumn{5}{c}{\textit{Action Recognition Methods}} \\
DTS-TPT & IJCAI 2024 & 79.40 & 54.40 & 66.90 \\
TEST-V & IJCAI 2025 & 78.30 & 53.00 & 65.65 \\
Hierarchical Adapter & IJCAI 2025 & 77.60 & 54.90 & 66.25 \\
Visual-Textual Refinement & ICCVW 2025 & 79.60 & 57.10 & 68.35 \\
CONDA & CVPR 2026 & 81.50 & 55.20 & 68.35 \\

\midrule
\rowcolor{black!8}\multicolumn{5}{c}{\textit{Our Method}} \\
LIVE$_s$ (Qwen2.5-VL-7B) & -- & 84.37 & 56.79 & 70.58 \\
LIVE$_s$ (Qwen3-VL-8B) & -- & \textbf{86.33} & \textbf{64.77} & \textbf{75.55} \\
\bottomrule[0.8pt]
\end{tabular}
}
\end{table}

In this section, we conduct experiments on human action recognition, which requires distinguishing subtle action differences from dynamic visual cues.
Besides general VLM and MLLM baselines, we further compare LIVE with representative action recognition methods, including DTS-TPT~\cite{DTS-TPT}, TEST-V~\cite{test-v}, Hierarchical Adapter~\cite{hierarchical-adapter}, Visual-Textual Refinement~\cite{visual-text-refine}, and CONDA~\cite{conda}. 
These methods cover different paradigms for enhancing VLM-based action recognition, including test-time adaptation, representation refinement, and temporal modeling.
As shown in Table~\ref{tab:action_results}, LIVE improves the average action recognition performance across different MLLM backbones.
In particular, $\mathrm{LIVE}_s$ brings a $1.32$-point average gain over Qwen3-VL-8B and surpasses CONDA by $7.20$ points.
These results further demonstrate the generalizability of LIVE to video-based fine-grained recognition.

\subsection{Comparisons on General and OOD Image Recognition}
\label{sec:exp_object_recognition}

\begin{table}[t]
\centering
\caption{
Comparisons with representative methods on general and OOD image recognition.
Top-1 accuracy (\%) is reported on ImageNet-1K and ImageNet-Sketch, abbreviated as ImageNet-S in the table, with the best results marked in \textbf{bold}.
``$+\mathrm{PCI}$'' denotes augmenting the corresponding post-trained MLLM with the SigLIP-based PCI module.
}
\label{tab:object_results}
\scriptsize
\setlength{\tabcolsep}{3.0pt}
\renewcommand{\arraystretch}{1.12}
\resizebox{\columnwidth}{!}{
\begin{tabular}{@{}l c c c c@{}}
\toprule[0.8pt]
Method & Source & ImageNet-1K & ImageNet-S & Avg. \\
\midrule
\rowcolor{black!8}\multicolumn{5}{c}{\textit{General VLMs}} \\
CLIP RN50 & ICML 2021 & 55.50 & 32.35 & 43.93 \\
CLIP ViT-B/16 & ICML 2021 & 63.84 & 44.98 & 54.41 \\
SigLIP & ICCV 2023 & 71.58 & 64.80 & 68.19 \\
SigLIP2 & arXiv 2025 & 73.08 & 65.10 & 69.09 \\
\midrule
\rowcolor{black!8}\multicolumn{5}{c}{\textit{MLLM-based Methods}} \\
Qwen2.5-VL-7B & arXiv 2025 & 61.01 & 56.00 & 58.51 \\
Qwen2.5-VL-32B & arXiv 2025 & 71.51 & 59.26 & 65.39 \\
Qwen3-VL-8B & arXiv 2025 & 70.68 & 63.01 & 66.85 \\
\midrule
Finedefics & ICLR 2025 & 55.86 & 56.64 & 56.25 \\
Finedefics \pcitag & - & 72.59 & 65.84 & 69.22 \\
Fine-R1-7B & ICLR 2026 & 61.27 & 48.95 & 55.11 \\
Fine-R1-7B \pcitag & - & 75.30 & 66.52 & 70.91 \\
DiVE-k & ICLR 2026 & 56.21 & 45.37 & 50.79 \\
DiVE-k \pcitag & - & 76.44 & 67.84 & 72.14 \\
\midrule
\rowcolor{black!8}\multicolumn{5}{c}{\textbf{\textit{Our Method}}} \\
LIVE$_s$ (Qwen2.5-VL-7B) & -- & 77.32 & 68.75 & 73.04 \\
LIVE$_s$ (Qwen3-VL-8B) & -- & \textbf{77.66} & \textbf{69.90} & \textbf{73.78} \\
\bottomrule[0.8pt]
\end{tabular}
}
\end{table}

In this section, we further evaluate LIVE on general and out-of-distribution (OOD) image recognition using ImageNet-1K and ImageNet-Sketch, respectively.
ImageNet-1K covers a large and diverse category list containing both coarse-grained object categories and fine-grained subordinate classes, while ImageNet-Sketch introduces a sketch-style distribution shift.
The compared methods include general VLMs, general MLLMs, and FGVR-oriented post-trained MLLMs.
Following the previous experiment, we further endow the post-trained MLLMs with our SigLIP-based PCI module, with the corresponding variants denoted by ``$+\mathrm{PCI}$''.

From the results in Table~\ref{tab:object_results}, one could have the following observations:
\textbf{i) FGVR-oriented post-training may compromise the generalization performance of MLLMs, particularly under OOD shifts.}
For example, Fine-R1 and DiVE-k, both built upon Qwen2.5-VL-7B, show average drops of 3.40 and 7.72 points relative to the base model, with larger decreases of 7.05 and 10.63 points on ImageNet-Sketch.
\textbf{ii) FGVR-oriented post-trained MLLMs consistently benefit from PCI on general and OOD recognition.}
Specifically, Finedefics, Fine-R1, and DiVE-k obtain average gains of 12.97, 15.80, and 21.35 points with PCI, respectively, indicating that reducing unnecessary category competition remains beneficial beyond FGVR.
\textbf{iii) LIVE achieves strong performance on both general and OOD image recognition.}
In particular, $\mathrm{LIVE}_s$ with Qwen3-VL-8B achieves the best average accuracy of $73.78\%$, improving its base model by $6.93$ points and outperforming Finedefics by $17.53$ points, highlighting the broader applicability of inference-time FGVR Laziness mitigation.

\subsection{Ablation Studies and Analytic Experiments}
\label{sec:exp_analysis}

In this section, we conduct a series of ablation studies and analytic experiments to further investigate LIVE.
Specifically, we first examine the contribution of each component and its performance across different model scales, followed by detailed analyses of the PCI, CSC, and SCR modules.
Unless otherwise stated, all experiments are conducted on the Flower and CUB datasets using LIVE$_c$, \textit{i.e.}, CLIP ViT-B/16 for PCI and Qwen2.5-VL-7B for CSC and SCR, respectively.

\subsubsection{Ablation Study}
\begin{figure}[t]
    \centering
    \includegraphics[width=0.99\linewidth]{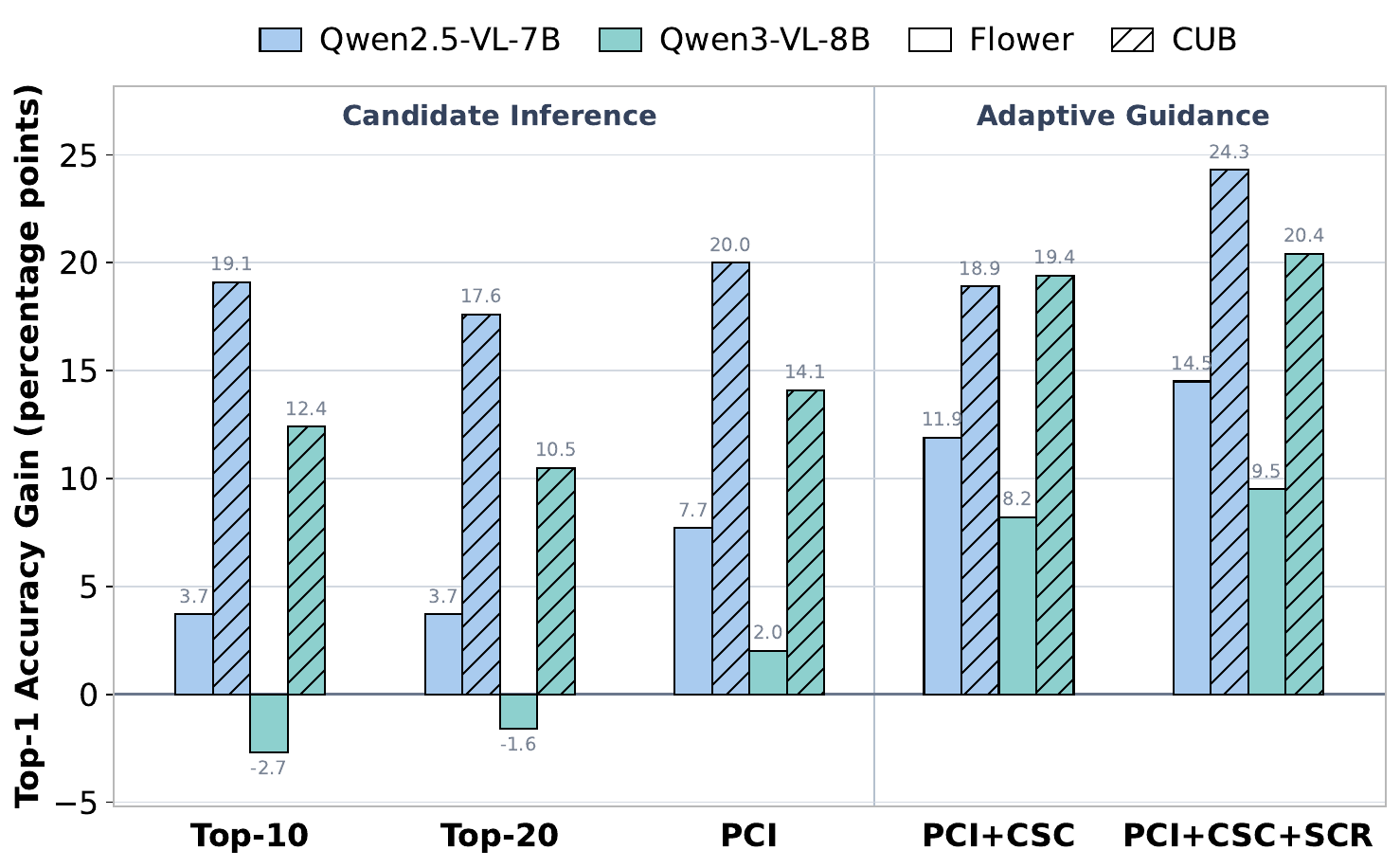}
\caption{
Ablation study of LIVE$_c$ on Flower and CUB datasets.
Results are reported as Top-1 accuracy gains over the corresponding MLLM baseline.
Here, Top-10 and Top-20 use CLIP-ranked candidates for recognition, while PCI uses the inferred plausible candidates.
PCI+CSC further introduces skill headers, and PCI+CSC+SCR denotes the complete $\mathrm{LIVE}_c$.
}
    \label{fig:ablation}
\end{figure}

\begin{figure}[t]
    \centering
    \includegraphics[width=0.99\linewidth]{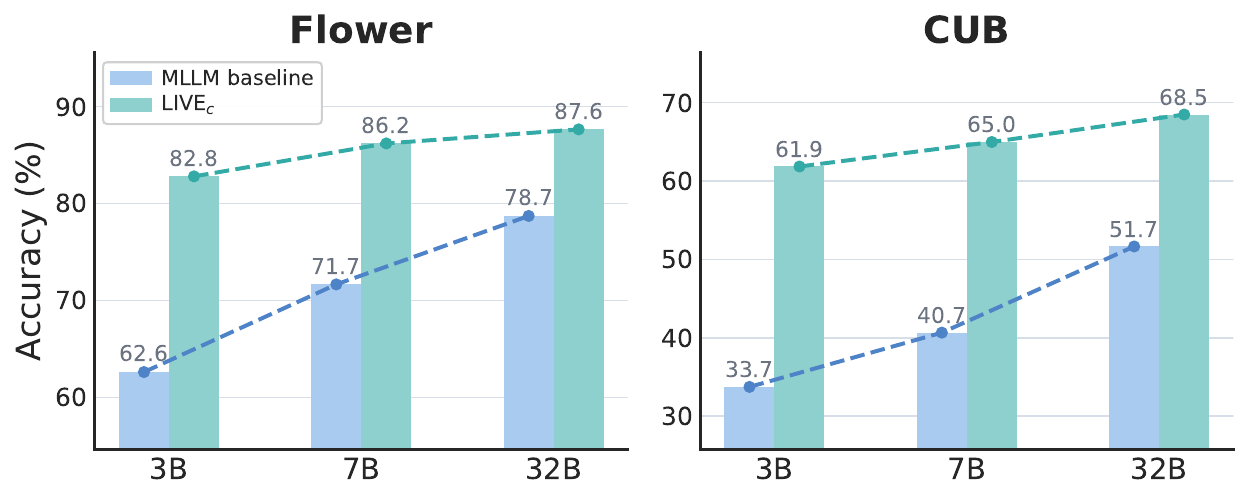}
    \caption{
    Scaling analysis of LIVE$_c$ across different Qwen2.5-VL model sizes on the Flower and CUB datasets.
    }
    \label{fig:scaling_law}
\end{figure}

To examine the contribution of each component in LIVE, we construct a series of variants and use the base MLLM over the full category list (\textit{i.e.}, $\mathbf{C}$) as the baseline.
Specifically, as shown in Fig.~\ref{fig:ablation},
i) The Top-10 and Top-20 settings perform recognition over fixed-size candidate lists selected from the top-$k$ CLIP similarity rankings;
ii) The PCI setting performs recognition over the inferred  candidate list (\textit{i.e.}, $\mathbf{C}_{x}$);
iii) PCI+CSC further introduces the skill headers (\textit{i.e.}, name and description) for these candidates;
iv) PCI+CSC+SCR, \textit{i.e.}, the complete $\mathrm{LIVE}_c$, additionally invokes the detailed skill bodies for uncertain samples through SCR.

From the results in Fig.~\ref{fig:ablation}, one could have the following observations and conclusions.
\textbf{i) A compact yet reliable candidate list is beneficial for FGVR.}
The Top-10 and Top-20 settings improve recognition in most cases, but degrade the performance of Qwen3-VL-8B on Flower.
In contrast, PCI consistently outperforms both fixed Top-$k$ variants across different datasets and MLLM backbones.
A possible reason is that fixed Top-$k$ selection struggles to balance ground-truth coverage and candidate competition. 
In other words, aggressive truncation might exclude the ground truth, whereas larger candidate lists retain more competing categories, making FGVR Laziness harder to alleviate.
\textbf{ii) Category-specific guidance further facilitates fine-grained discrimination.}
Building on PCI, introducing skill headers generally yields further gains, including an improvement of $6.2$ percentage points for Qwen3-VL-8B on Flower.
This suggests that even lightweight category-specific guidance can facilitate the association between visual cues and category concepts, thereby alleviating FGVR Laziness.
\textbf{iii) Selectively invoking detailed guidance further improves recognition for uncertain queries.}
With the further incorporation of SCR, the complete $\mathrm{LIVE}_c$ achieves an accuracy gain of $24.3$ percentage points on CUB with Qwen2.5-VL-7B.
These results suggest that selectively providing detailed guidance for uncertain queries can further strengthen the association between visual cues and category concepts, thereby mitigating FGVR Laziness.

\begin{figure*}[t]
\centering
{
    \includegraphics[width=0.235\textwidth]{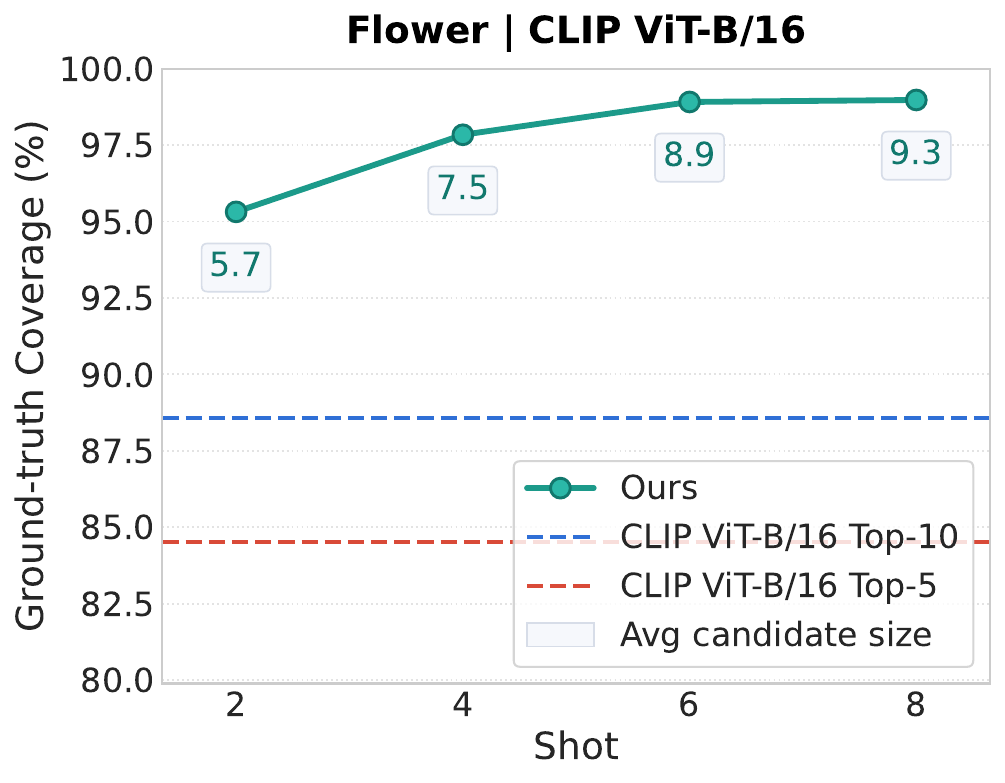}
}
\hfill
{
    \includegraphics[width=0.235\textwidth]{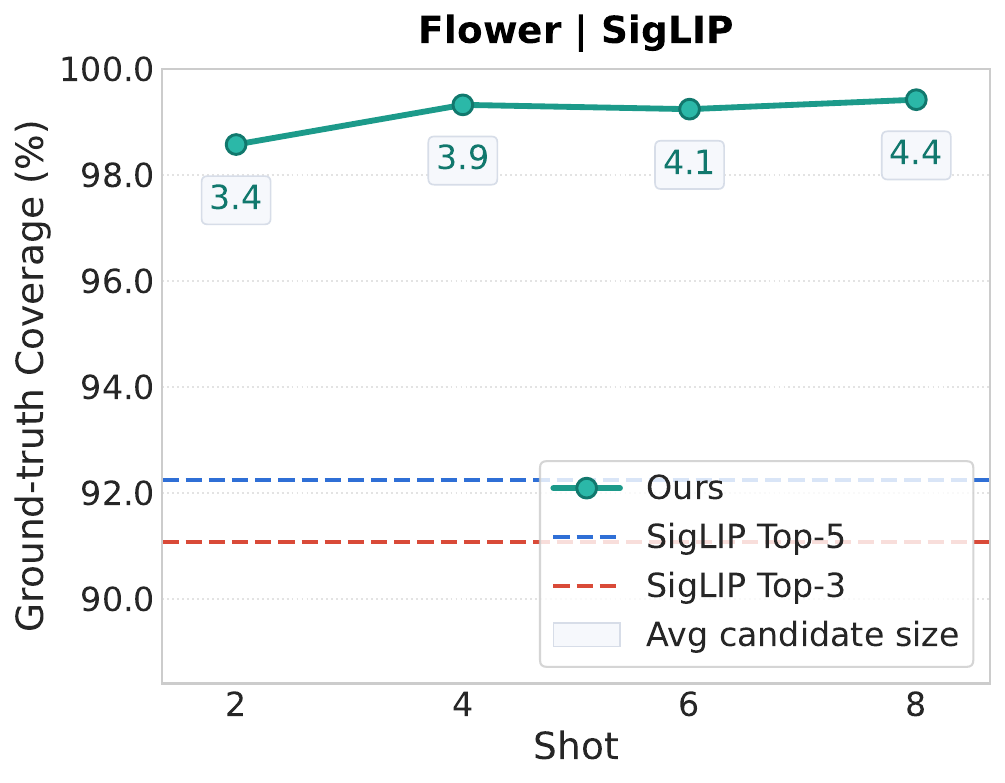}
}
\hfill
{
    \includegraphics[width=0.235\textwidth]{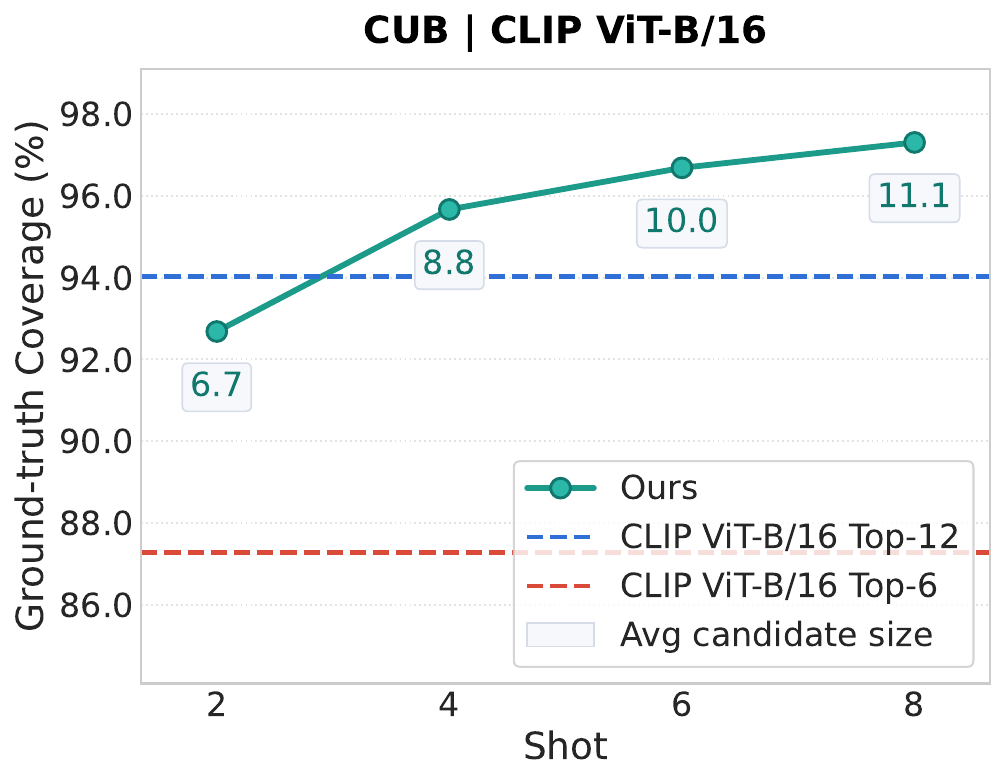}
}
\hfill
{
    \includegraphics[width=0.235\textwidth]{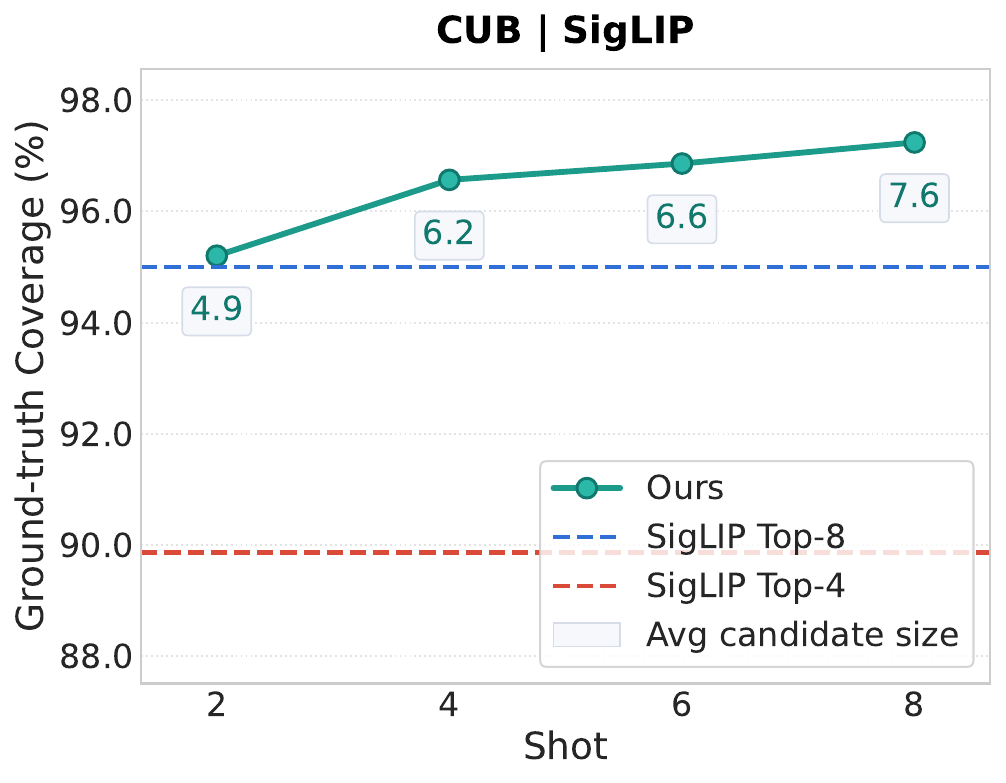}
}
\caption{
Quantitative analysis of PCI with different VLM backbones and reference-shot settings.
The solid curves report the ground-truth coverage, while the annotated values denote the corresponding average size of the inferred candidate list $\mathbf{C}_{x}$.
The dashed lines report fixed Top-$k$ baselines, with $k$ chosen according to the lower and upper bounds of the candidate-list sizes produced by PCI.
}

\label{fig:cci_analysis}
\end{figure*}

\subsubsection{Scaling Analysis}

To investigate the effectiveness of LIVE across different model scales, we further evaluate LIVE$_c$ with Qwen2.5-VL models ranging from 3B to 32B parameters.
As shown in Fig.~\ref{fig:scaling_law}, FGVR performance steadily improves with model size, while $\mathrm{LIVE}_c$ provides consistent gains across all scales.
For example, on CUB, $\mathrm{LIVE}_c$ improves Qwen2.5-VL-3B from 33.7\% to 61.9\%.
Notably, the 3B model equipped with $\mathrm{LIVE}_c$ already outperforms the base 32B model on both Flower and CUB.
These results demonstrate that mitigating FGVR Laziness could complement model scaling in improving FGVR performance.

\subsubsection{Analytic Study on PCI}

Theorem~\ref{thm:risk} in Section~\ref{sec:problem_formulation} shows that candidate reduction is a lower-risk strategy for mitigating FGVR Laziness when the probability of retaining the ground-truth category $p_{\mathrm{ret}}$ satisfies $p_{\mathrm{ret}}>\frac{1}{2}+\frac{1}{2M}$.
To verify whether PCI satisfies this condition in practice, we report its ground-truth coverage and average candidate-list size under different VLM backbones and reference-shot settings, with fixed Top-$k$ selection as baselines.
As shown in Fig.~\ref{fig:cci_analysis}, PCI consistently maintains high ground-truth coverage with relatively small candidate sets. 
For example, on Flower with CLIP, the 4-shot setting retains only 7.5 candidates on average while achieving over 97\% coverage, whereas fixed Top-10 achieves less than 90\%. 
Moreover, increasing the number of reference samples further improves coverage with only a modest increase in candidate-list size.
These results verify that PCI satisfies the theoretical requirement for low-risk candidate reduction while maintaining a compact candidate list, thereby reducing category competition to mitigate FGVR Laziness.

\begin{figure}[t]
    \centering
    \includegraphics[width=0.99\linewidth]{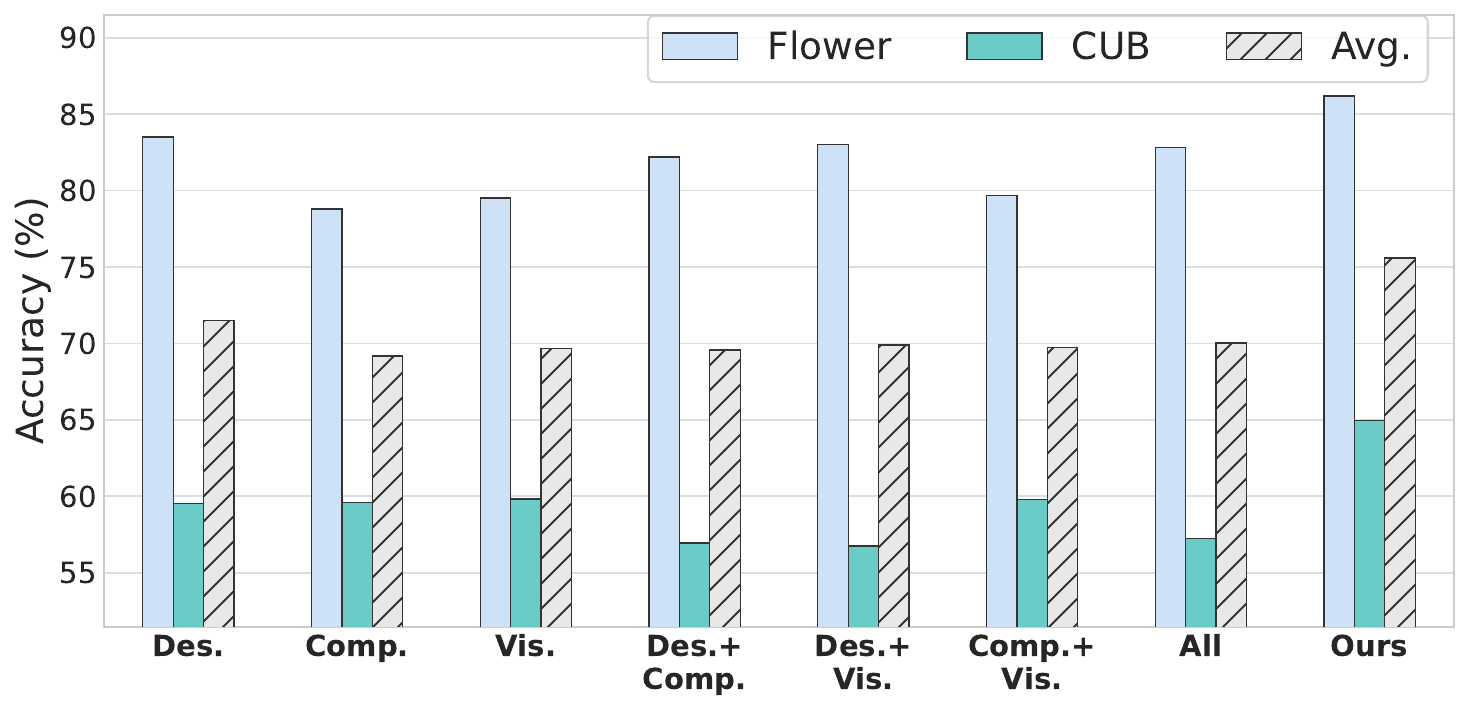}
    \caption{
Analytic study on Category Skill components and invocation strategy.
``Des.'', ``Vis.'', and ``Comp.'' denote category description, visual checklist, and comparative information, respectively.
``All'' introduces all components simultaneously. ``Ours'' invokes detailed skill bodies on demand through SCR.
}
    \label{fig:knowledge_type}
\end{figure}

\subsubsection{Analytic Study on CSC}

We further investigate how the recognition guidance $G$ could be constructed using different components of Category Skills.
As shown in Fig.~\ref{fig:knowledge_type}, different skill components and their combinations exhibit varying effectiveness, and simply introducing more guidance does not necessarily improve recognition.
In particular, simultaneously introducing all components remains inferior to selectively invoking detailed guidance according to the current recognition state.
Consistent with this observation, the complete $\mathrm{LIVE}_c$ achieves the best performance on both datasets.
These results demonstrate that effective skill utilization depends not only on informative skill content but also on adaptive invocation, which enables more targeted recognition guidance for mitigating FGVR Laziness.

\subsubsection{Analytic Study on SCR}

\begin{figure}[t]
    \centering
    \includegraphics[width=0.99\linewidth]{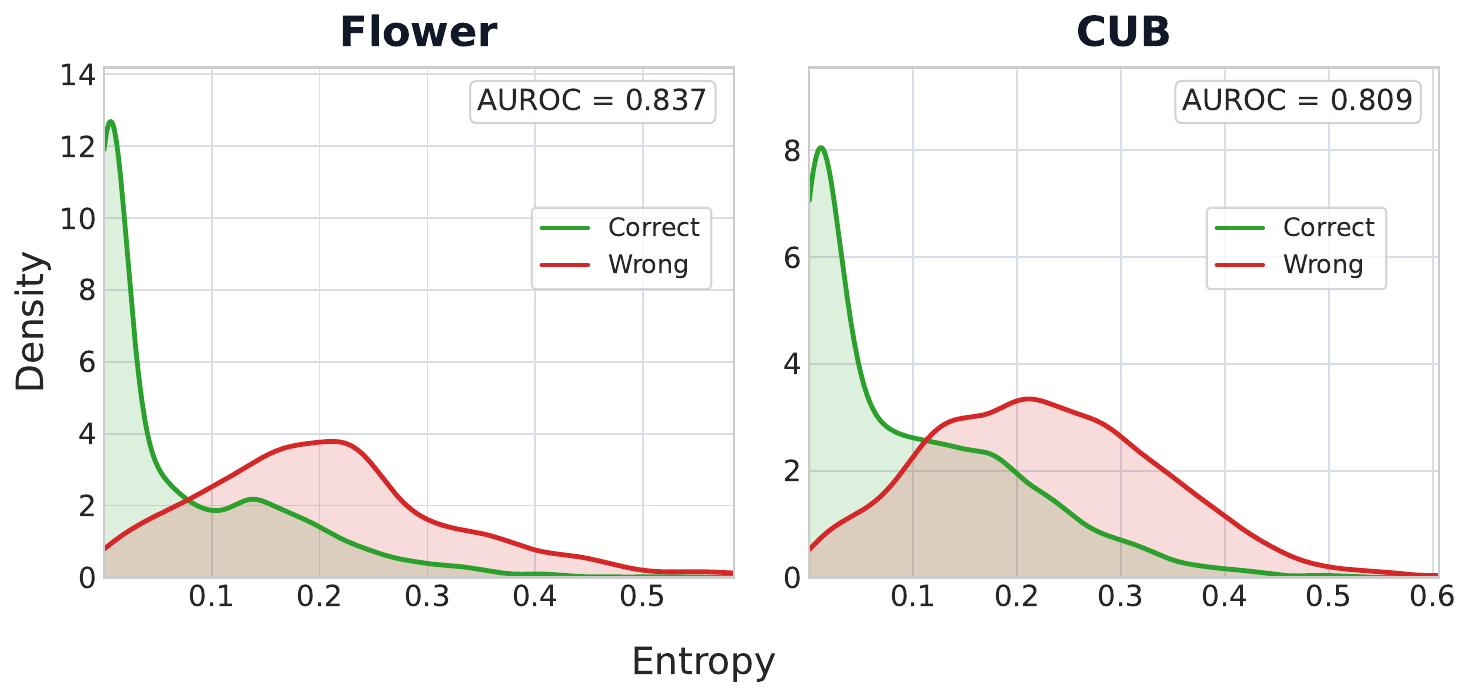}
    \caption{
Entropy distributions of correct and incorrect predictions.
Higher entropy indicates greater prediction uncertainty.
}
    \label{fig:answer_entropy}
\end{figure}

\begin{table}[t]
\centering
\caption{
Analytic study on SCR.
The upper part evaluates entropy-based uncertainty identification, while the lower part compares detailed skill invocation for all candidates and confused categories on high-entropy samples.
}
\label{tab:scr_analysis}

\small
\renewcommand{\arraystretch}{1.15}
\setlength{\tabcolsep}{3.5pt}
\begin{tabular*}{\columnwidth}
{@{\extracolsep{\fill}}lccc@{}}
\toprule
Metric & Flower & CUB & Avg. \\
\midrule
\rowcolor{black!8}
\multicolumn{4}{c}{\textit{Entropy-based Uncertainty Identification}} \\
Low-entropy Acc. (\%)
& 95.46 & 87.64 & 91.55 \\
High-entropy Acc. (\%)
& 58.59 & 37.40 & 48.00 \\
High-entropy Ratio (\%)
& 32.44 & 55.97 & 44.21 \\

\midrule
\rowcolor{black!8}
\multicolumn{4}{c}{\textit{Detailed Skill Invocation for High-entropy Samples}} \\
Avg. Size of All Candidates
& 9.94 & 10.61 & 10.28 \\
Avg. Size of Confused Candidates
& 5.62 & 8.35 & 6.99 \\

Skills for All Candidates Acc. (\%)
& 64.51 & 46.75 & 55.63 \\
Skills for Confused Categories Acc. (\%)
& \textbf{66.86} & \textbf{47.14} & \textbf{57.00} \\
\bottomrule
\end{tabular*}
\end{table}

SCR determines when detailed skill bodies should be invoked and which candidate categories require such guidance.
We first examine whether answer entropy can reliably reflect prediction uncertainty.
As shown in Fig.~\ref{fig:answer_entropy}, incorrect predictions generally exhibit higher entropy than correct ones.
Accordingly, we use a fixed entropy threshold to identify uncertain predictions.
As shown in Table~\ref{tab:scr_analysis}, low-entropy samples achieve 91.55\% accuracy on average, whereas high-entropy samples achieve only 48.00\%, supporting entropy as a reliable criterion for uncertainty identification. 
For these uncertain predictions, SCR further identifies confused categories through prefix matching and invokes detailed skill bodies only for these categories.
As shown in Table~\ref{tab:scr_analysis}, prefix matching reduces the average number of categories for detailed skill invocation from 10.28 to 6.99, while targeted invocation improves the average accuracy from 55.63\% to 57.00\% compared with invoking detailed guidance for all candidates.
These results demonstrate that prediction uncertainty and category confusion provide effective signals for on-demand skill invocation, enabling more targeted recognition guidance to mitigate FGVR Laziness.

\subsubsection{Parameter Analysis}
\label{sec:parameter_analysis}

To investigate the robustness of LIVE, we conduct experiments on Flower and CUB using $\mathrm{LIVE}_c$, varying one hyperparameter at a time while keeping the others at their default values.

\begin{figure}[t]
    \centering
    \includegraphics[width=0.48\textwidth]{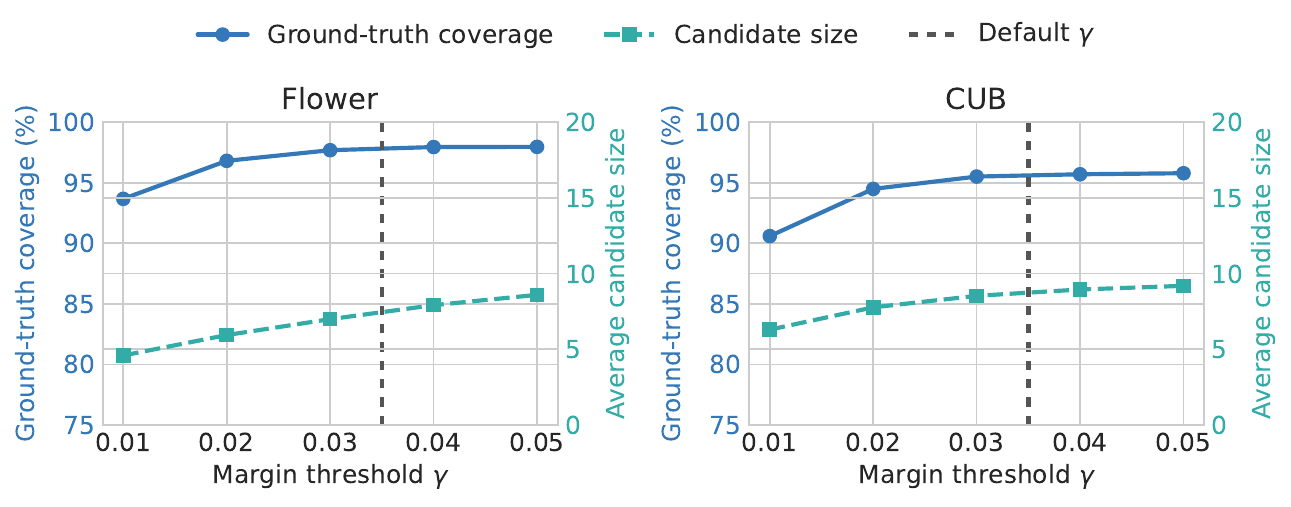}
 
    \includegraphics[width=0.48\textwidth]{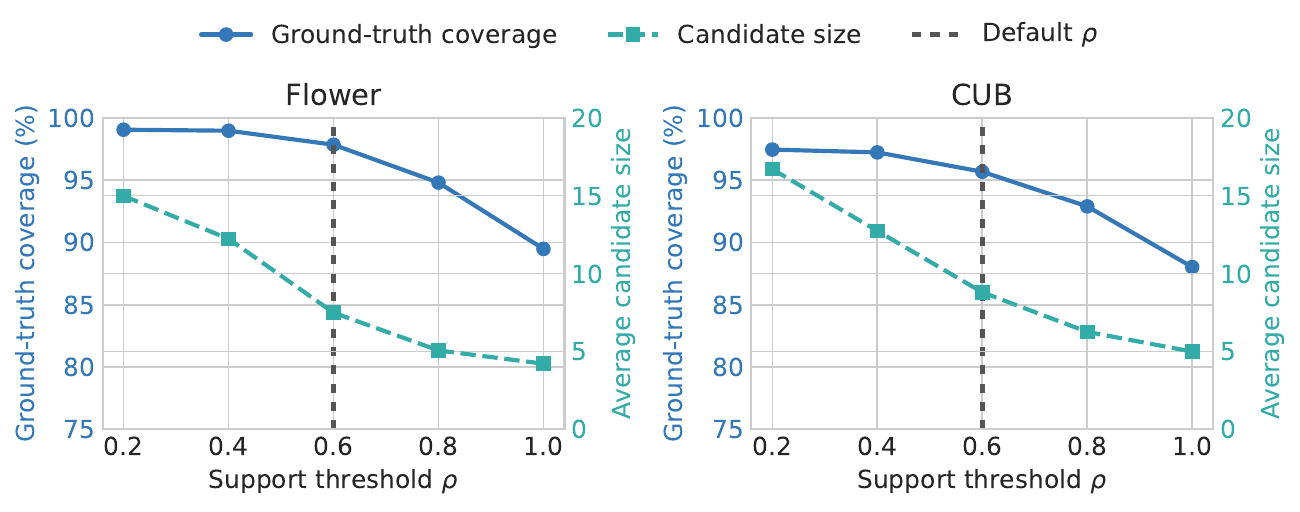}

    \includegraphics[width=0.48\textwidth]{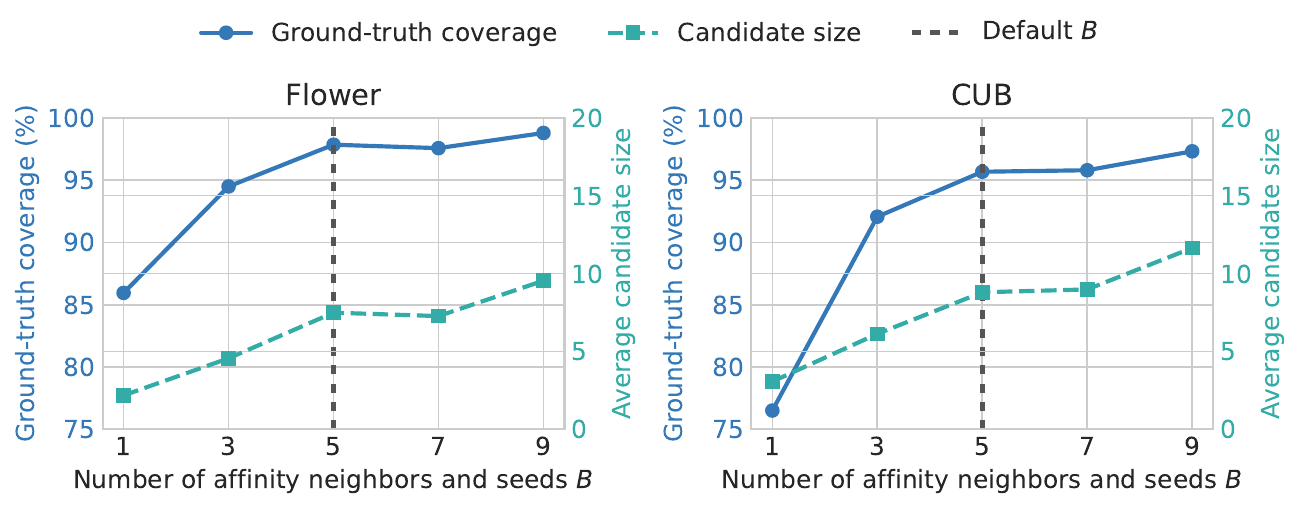}
  
    \caption{
    Parameter analysis of PCI on Flower and CUB.
    The three rows examine the ranking-reliability threshold $\gamma$, support threshold $\rho$, and seed number $B$, respectively.
    The blue and green curves report ground-truth coverage and average candidate size, while the dotted lines indicate the default parameter settings.
    }
    \label{fig:pci_parameter}
\end{figure}

\textbf{Parameter Analysis of PCI.}
As shown in Fig.~\ref{fig:pci_parameter}, one could have the following observations.
i) The ranking-reliability threshold $\gamma$ in Eq.~\ref{eq:low_conf_candidate_list} determines the boundary between different candidate recall strategies.
As $\gamma$ increases, more queries fall into the low-reliability interval and are handled by the corresponding recall strategy, leading to higher ground-truth coverage with a moderate increase in candidate size.
Notably, PCI consistently maintains over $90\%$ coverage with fewer than $10$ candidates on average across all evaluated values of $\gamma$, demonstrating its robustness to the ranking-reliability threshold;
ii) The support threshold $\rho$ in Eq.~\ref{eq:multi_seed_support} controls the selection of potentially confused categories.
A smaller $\rho$ adopts a looser matching criterion and recalls more candidates, whereas a larger $\rho$ yields a more compact candidate set but may sacrifice ground-truth coverage.
The default $\rho=0.6$ therefore achieves a favorable trade-off between ground-truth coverage and candidate compactness;
iii) The seed number $B$ in Eq.~\ref{eq:multi_seed_support} affects how much confusion information is considered during candidate inference.
Increasing $B$ introduces more seed categories and richer affinity information, thereby improving ground-truth coverage.
The gain gradually saturates around $B=5$, while further increasing $B$ mainly enlarges the candidate set.
Overall, PCI maintains high ground-truth coverage with compact candidate sets across a broad range of reasonable parameter settings, demonstrating favorable robustness without requiring precise hyperparameter tuning.

\begin{figure}[t]
    \centering
    \includegraphics[width=0.98\linewidth]{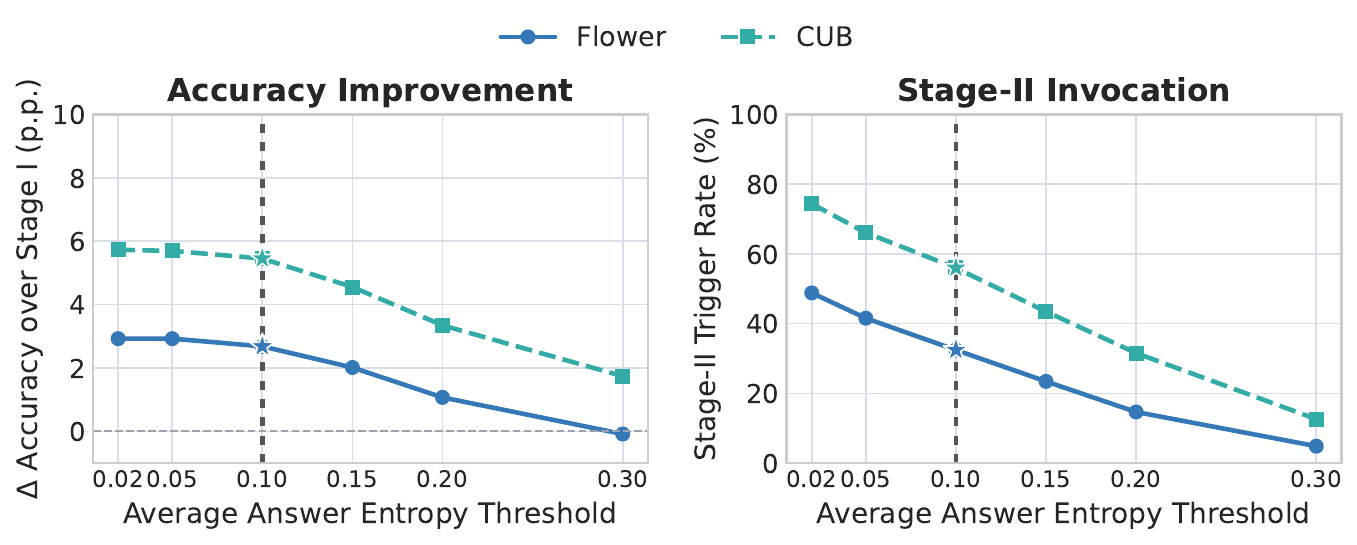}
    \caption{
    Parameter analysis of the entropy threshold $\eta$ in SCR on Flower and CUB.
    The left and right panels report the Top-1 accuracy gain over Stage-I recognition and the Stage-II trigger rate, respectively.
    The dotted lines indicate the default setting $\eta=0.1$.
    }
    \label{fig:scr_parameter}
\end{figure}

\textbf{Parameter Analysis of SCR.}
The entropy threshold $\eta$ in Eq.~\ref{eq:stage2_recognition} determines which queries receive detailed skill guidance according to their recognition uncertainty.
As shown in Fig.~\ref{fig:scr_parameter}, a smaller $\eta$ substantially increases the proportion of queries receiving detailed guidance but brings only limited additional accuracy gains, while an overly large $\eta$ may miss uncertain queries that could benefit from further guidance.
The default $\eta=0.1$ therefore provides a favorable trade-off between recognition improvement and inference efficiency.
More importantly, SCR preserves comparable accuracy gains over a reasonable range of $\eta$, indicating that its effectiveness does not rely on a narrowly tuned threshold.
\section{Conclusion}

In this work, we revisit the FGVR mechanism underlying MLLMs and reveal FGVR Laziness, an inference-time phenomenon where available fine-grained visual cues and category concepts might not be sufficiently associated during recognition.
Through both theoretical and empirical analyses, we demonstrate that such associations would be jointly affected by candidate competition and recognition guidance, while appropriate intervention on these factors can improve the associations and thereby alleviate FGVR Laziness.
Accordingly, we propose LIVE, a training-free framework that constructs compact yet reliable candidate lists and provides adaptive recognition guidance through reusable FGVR skills and on-demand skill invocation.
Extensive experiments on 13 benchmarks spanning both image and video modalities verify the effectiveness of LIVE.
In the future, we plan to investigate whether MLLMs exhibit similar laziness phenomena in visual question answering, visual grounding, and video temporal grounding, while extending LIVE to broader recognition scenarios, including fine-grained text classification.

\bibliographystyle{IEEEtran}
\bibliography{references}

\clearpage

\twocolumn[{%
    \centering
    {\Huge
    Supplementary Material for From Glance to Focus: Mitigating Recognition
    Laziness in MLLMs for Fine-Grained Recognition\par}
    \vspace{1em}

    {\sublargesize
    Yutong Yang, Zhuqi Liu, Mouxing Yang, and Xi Peng\par}
    \vspace{3em}
}]

\begingroup
\renewcommand{\thefootnote}{}
\footnotetext{%
Y. Yang, Z. Liu, and M. Yang are with the College of Computer Science, Sichuan University, Chengdu 610065, China
(e-mail: \{yutongyang.gm, yunfree36, yangmouxing\}@gmail.com).
X. Peng is with the School of Artificial Intelligence, Sichuan University, Chengdu 610065, China,
and also with Tianfu Jincheng Laboratory, Chengdu 610093, China
(e-mail: pengx.gm@gmail.com).
}
\endgroup

\appendices
\setcounter{theorem}{0} 


This supplementary material provides additional theoretical analyses, empirical evidence, and experimental details of LIVE.
Specifically, Appendix~\ref{sec:sup_theory} presents detailed proofs of the three theorems introduced in the main paper.
Appendix~\ref{sec:sup_empirical} provides additional empirical analyses and visualizations of FGVR Laziness.
Appendix~\ref{sec:sup_detail} provides additional experimental details, including dataset statistics, implementation details, prompt templates,  representative examples of Category Skills, and further analysis of PCI.

\section{Theoretical Analysis}
\label{sec:sup_theory}

In this section, we provide detailed proofs of the three theorems introduced in the main paper.
We first formulate category-to-image attention and analyze the effects of candidate ordering and candidate competition, followed by the degradation risks of candidate interventions.

\subsection{Attention Formulation}
\label{sec:attention_formulation}

As discussed in the main paper, we adopt category-to-image attention as a proxy for the association strength between visual cues and category concepts.
Here, we provide its mathematical formulation under causal attention.

For a given category $c$, we use $\mathcal{I}$ and $\mathcal{T}_c$ to represent the image tokens and the tokens corresponding to $c$ in the candidate list, respectively.
For each category token $t\in\mathcal{T}_c$, its attention weight assigned to an accessible token $j$ is given by
\begin{equation}
\alpha_{tj}
=
\frac{\exp(s_{tj})}
{\sum_{m\in\mathcal{P}_t}\exp(s_{tm})},
\qquad
s_{tj}
=
\frac{\mathbf{q}_t^\top\mathbf{k}_j}{\sqrt{d}},
\label{eq:sup_attention}
\end{equation}
where $\mathcal{P}_t$ denotes the set of tokens accessible to $t$ under causal attention, and $\mathbf{q}_t$ and $\mathbf{k}_j$ denote the corresponding query and key representations.
The category-to-image attention of category $c$ is then defined as
\begin{equation}
R_{\Theta}(c\mid x,\mathbf{C},G)
=
\frac{1}{|\mathcal{T}_c|}
\sum_{t\in\mathcal{T}_c}
\sum_{i\in\mathcal{I}}
\alpha_{ti}.
\label{eq:sup_category_attention}
\end{equation}
For notational simplicity, the attention-head and layer indices are omitted.

In the following proofs, we consider causal attention with image tokens preceding the candidate list.
To isolate the effects of candidate interventions, we compare attention distributions while holding the logits to shared accessible tokens fixed.
The target category retains the same tokenization across the compared candidate lists.
These controlled comparisons allow us to examine how changes in candidate accessibility or competing attention mass affect category-to-image attention.

\subsection{Proof of Theorem~1}
\label{sec:proof_ordering}

\begin{theorem}[Candidate Ordering Effect]
\label{sup:thm:ordering}
Under causal attention with image tokens preceding the candidate list and attention logits to shared accessible tokens held fixed, moving a specified category $c$ forward across other candidates increases its category-to-image attention, namely,
\begin{equation}
R_{\Theta}(c\mid x,\mathbf{C}_{c\cdot},G)
>
R_{\Theta}(c\mid x,\mathbf{C}_{\cdot c},G),
\label{sup:eq:ordering_effect}
\end{equation}
where $\mathbf{C}_{c\cdot}$ and $\mathbf{C}_{\cdot c}$ have the same candidate composition but place $c$ at earlier and later positions, respectively.
The relative order of other candidates is preserved, with $x$ and $G$ fixed.
\end{theorem}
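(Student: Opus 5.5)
The plan is to compare, token by token, the softmax denominators of each category token $t\in\mathcal{T}_c$ under the two orderings. Under causal attention, $\mathcal{P}_t$ consists of all tokens preceding $t$ (and $t$ itself). This set includes the image tokens $\mathcal{I}$, since they precede the candidate list, together with the instruction or prefix tokens before the list, the tokens of every candidate placed before $c$, and the earlier tokens of $c$ itself.

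First, I fix notation. Let $\mathcal{P}_t^{\mathrm{early}}$ and $\mathcal{P}_t^{\mathrm{late}}$ be the accessible sets of the token $t$ in $\mathbf{C}_{c\cdot}$ and $\mathbf{C}_{\cdot c}$, respectively. Because the composition is the same and the relative order of the other candidates is preserved, moving $c$ forward only removes the tokens of the candidates that $c$ jumps over from the prefix of $t$. Hence
\[
\mathcal{P}_t^{\mathrm{early}}\subsetneq \mathcal{P}_t^{\mathrm{late}},\qquad \Delta_t:=\mathcal{P}_t^{\mathrm{late}}\setminus\mathcal{P}_t^{\mathrm{early}}\neq\emptyset.
\]
Here $\Delta_t$ is exactly the set of tokens of the candidates that $c$ crosses, together with any separators between them. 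It is nonempty because $c$ moves across at least one candidate. The tokenization of $c$ is unchanged, so the within-category part of the prefix matches across the two lists.

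Second, I invoke the hypothesis that logits to shared accessible tokens are held fixed. This gives $s_{tj}$ identical in both lists for every $j\in\mathcal{P}_t^{\mathrm{early}}$, and in particular for every $j\in\mathcal{I}$. The numerator $\sum_{i\in\mathcal{I}}\exp(s_{ti})$ is therefore the same in both settings. The denominators satisfy
\[
Z_t^{\mathrm{late}}=Z_t^{\mathrm{early}}+\sum_{m\in\Delta_t}\exp(s_{tm})>Z_t^{\mathrm{early}},
\]
since every exponential is strictly positive. Consequently $\sum_{i\in\mathcal{I}}\alpha_{ti}$ is strictly larger in $\mathbf{C}_{c\cdot}$ for each $t\in\mathcal{T}_c$. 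Averaging over $t$ with the common factor $1/|\mathcal{T}_c|$, as in the definition of $R_{\Theta}$ in Eq.~\eqref{eq:sup_category_attention}, yields Eq.~\eqref{sup:eq:ordering_effect}. For a multi-step move, one can either argue directly as above or induct over adjacent transpositions, since each swap strictly increases $R_{\Theta}$.

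The main obstacle is making the fixed-logit hypothesis precise. In a real transformer, moving $c$ changes the positional encodings (e.g.\ RoPE) and the hidden states of $t$, so $\mathbf{q}_t$ and the $s_{tj}$ genuinely change. I would state explicitly that the theorem is a controlled comparison in which $s_{tj}$ for $j\in\mathcal{P}_t^{\mathrm{early}}$ is treated as invariant, as announced in the Attention Formulation subsection, so that the only effect is the enlargement of the accessible set. I would also remark that the strict inequality relies on $\Delta_t\neq\emptyset$ and on the image set being nonempty with finite logits.
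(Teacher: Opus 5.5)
Your proof is correct and follows essentially the same route as the paper. The paper also splits the accessible set of each $t\in\mathcal{T}_c$ into $\mathcal{I}\cup\mathcal{O}_t$ versus $\mathcal{I}\cup\mathcal{O}_t\cup\mathcal{B}$ (your $\Delta_t$ plays the role of $\mathcal{B}$), uses the fixed shared logits to keep the image mass $Z_{\mathcal{I},t}$ unchanged while the denominator gains $Z_{\mathcal{B},t}>0$, and then averages over category tokens; your remarks on separator tokens and on the idealized nature of the fixed-logit hypothesis are reasonable clarifications that leave the argument unchanged.
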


\begin{proof}
Consider two candidate lists with identical composition:
\begin{equation}
\begin{aligned}
\mathbf{C}_{c\cdot}
&=(\ldots,c,b_1,\ldots,b_K,\ldots),\\
\mathbf{C}_{\cdot c}
&=(\ldots,b_1,\ldots,b_K,c,\ldots),
\end{aligned}
\label{eq:sup_ordering_lists}
\end{equation}
where $K\geq 1$ and the relative order of all candidates other than $c$ remains unchanged.

For any category token $t\in\mathcal{T}_c$, the tokens corresponding to $b_1,\ldots,b_K$ are inaccessible under $\mathbf{C}_{c\cdot}$ but become accessible under $\mathbf{C}_{\cdot c}$ due to causal attention.
We denote these newly accessible tokens by $\mathcal{B}$ and the shared accessible non-image tokens by $\mathcal{O}_t$.
Since the image tokens precede the candidate list, the corresponding accessible token sets satisfy
\begin{equation}
\begin{aligned}
\mathcal{P}_t(\mathbf{C}_{c\cdot})
&=\mathcal{I}\cup\mathcal{O}_t,\\
\mathcal{P}_t(\mathbf{C}_{\cdot c})
&=\mathcal{I}\cup\mathcal{O}_t\cup\mathcal{B}.
\end{aligned}
\label{eq:sup_ordering_accessibility}
\end{equation}

Following the controlled comparison in Appendix~\ref{sec:attention_formulation}, the attention logits to shared accessible tokens remain unchanged.
For each $t$, define
\begin{equation}
\begin{aligned}
Z_{\mathcal{I},t}
&=\sum_{i\in\mathcal{I}}\exp(s_{ti}),\\
Z_{\mathcal{O},t}
&=\sum_{j\in\mathcal{O}_t}\exp(s_{tj}),\\
Z_{\mathcal{B},t}
&=\sum_{j\in\mathcal{B}}\exp(s_{tj}),
\end{aligned}
\label{eq:sup_ordering_mass}
\end{equation}
where $Z_{\mathcal{B},t}$ is evaluated under $\mathbf{C}_{\cdot c}$.
Since both $\mathcal{I}$ and $\mathcal{B}$ are non-empty, we have $Z_{\mathcal{I},t}>0$ and $Z_{\mathcal{B},t}>0$.

According to Eq.~\ref{eq:sup_attention}, the attention assigned by token $t$ to the image tokens under the two candidate orders is
\begin{equation}
\begin{aligned}
R_t(\mathbf{C}_{c\cdot})
&=
\frac{Z_{\mathcal{I},t}}
{Z_{\mathcal{I},t}+Z_{\mathcal{O},t}},\\
R_t(\mathbf{C}_{\cdot c})
&=
\frac{Z_{\mathcal{I},t}}
{Z_{\mathcal{I},t}+Z_{\mathcal{O},t}+Z_{\mathcal{B},t}}.
\end{aligned}
\label{eq:sup_ordering_token_attention}
\end{equation}

Therefore,
\begin{equation}
\begin{aligned}
&R_t(\mathbf{C}_{c\cdot})
-R_t(\mathbf{C}_{\cdot c})\\
&=
\frac{Z_{\mathcal{I},t}Z_{\mathcal{B},t}}
{(Z_{\mathcal{I},t}+Z_{\mathcal{O},t})
(Z_{\mathcal{I},t}+Z_{\mathcal{O},t}+Z_{\mathcal{B},t})}
>0.
\end{aligned}
\label{eq:sup_ordering_difference}
\end{equation}

Thus, moving category $c$ forward strictly increases the image attention of each token $t\in\mathcal{T}_c$ under the controlled comparison.
Averaging over the category tokens according to Eq.~\ref{eq:sup_category_attention} yields
\begin{equation}
R_{\Theta}(c\mid x,\mathbf{C}_{c\cdot},G)
>
R_{\Theta}(c\mid x,\mathbf{C}_{\cdot c},G),
\end{equation}
which completes the proof.
\end{proof}

\subsection{Proof of Theorem~2}
\label{sec:proof_composition}

\begin{theorem}[Candidate Competition Effect]
\label{sup:thm:composition}
Under causal attention with image tokens preceding the candidate list and attention logits to shared accessible tokens held fixed, reducing attention competition from other candidates increases the category-to-image attention of a specified category $c$, namely,
\begin{equation}
R_{\Theta}(c\mid x,\mathbf{C}_{c+},G)
>
R_{\Theta}(c\mid x,\mathbf{C}_{c-},G),
\label{sup:eq:composition_effect}
\end{equation}
where $\mathbf{C}_{c+}$ and $\mathbf{C}_{c-}$ impose weaker and stronger attention competition on $c$, respectively.
The position of $c$ is preserved, with $x$ and $G$ fixed.
\end{theorem}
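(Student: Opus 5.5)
The plan mirrors the proof of Theorem~\ref{sup:thm:ordering}, but now the accessible token set of each category token $t\in\mathcal{T}_c$ stays fixed in size (the position of $c$ is preserved), and what changes is the exponential mass of the competing candidate tokens. The first step is to make precise what ``weaker/stronger attention competition'' means. Fix the position of $c$ and split the tokens accessible to $t$ into three groups: the image tokens $\mathcal{I}$; the shared non-image tokens $\mathcal{O}_t$, namely instruction and guidance tokens and any candidates common to both lists; and the tokens $\mathcal{D}_t$ of the candidates preceding $c$ that differ between $\mathbf{C}_{c+}$ and $\mathbf{C}_{c-}$. The logits $s_{ti}$ for $i\in\mathcal{I}$ and $s_{tj}$ for $j\in\mathcal{O}_t$ are held fixed, as in Appendix~\ref{sec:attention_formulation}. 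Define $Z_{\mathcal{I},t}$ and $Z_{\mathcal{O},t}$ as in Eq.~\ref{eq:sup_ordering_mass}. Define $Z^{+}_{\mathcal{D},t}$ and $Z^{-}_{\mathcal{D},t}$ as the exponential mass of the differing tokens under $\mathbf{C}_{c+}$ and $\mathbf{C}_{c-}$. I will formalize ``weaker competition'' as $Z^{+}_{\mathcal{D},t}<Z^{-}_{\mathcal{D},t}$ for every $t\in\mathcal{T}_c$.

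This definition covers the intended cases. Replacing confusable categories with less related ones (e.g., aircraft names for birds) lowers the logits of those tokens. Removing candidates altogether sets $Z^{+}_{\mathcal{D},t}=0$, which is the maximal gain mentioned in the main text. If removal shortens the list, the tokens of $c$ shift, but only relative positions among earlier tokens change. I would absorb this into the controlled-comparison assumption that logits to shared tokens are held fixed.

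The computation then follows directly. By Eq.~\ref{eq:sup_attention}, write
\[
R_t(\mathbf{C}_{c\pm})=\frac{Z_{\mathcal{I},t}}{Z_{\mathcal{I},t}+Z_{\mathcal{O},t}+Z^{\pm}_{\mathcal{D},t}}.
\]
Because $Z_{\mathcal{I},t}>0$ and the map $z\mapsto Z_{\mathcal{I},t}/(Z_{\mathcal{I},t}+Z_{\mathcal{O},t}+z)$ is strictly decreasing, we get
\[
R_t(\mathbf{C}_{c+})-R_t(\mathbf{C}_{c-})
=\frac{Z_{\mathcal{I},t}\bigl(Z^{-}_{\mathcal{D},t}-Z^{+}_{\mathcal{D},t}\bigr)}{(Z_{\mathcal{I},t}+Z_{\mathcal{O},t}+Z^{+}_{\mathcal{D},t})(Z_{\mathcal{I},t}+Z_{\mathcal{O},t}+Z^{-}_{\mathcal{D},t})}>0.
\]
Averaging over $t\in\mathcal{T}_c$ via Eq.~\ref{eq:sup_category_attention}, with the tokenization of $c$ unchanged, gives Eq.~\ref{sup:eq:composition_effect}.

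Candidates that come after $c$ are invisible to $t$ under causal attention, so they do not affect the comparison. This means the competition condition only needs to hold for the preceding candidates. The removal case ($Z^{+}_{\mathcal{D},t}=0$) then yields the largest value of $R_t$ among all lists sharing the same fixed part.

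The main obstacle is the modeling step, not the algebra: deciding how to turn the informal phrase ``impose weaker attention competition'' into a quantitative condition, and stating it per token. If it held only on average over $\mathcal{T}_c$, strict inequality could fail, because $R_t$ is not linear in $Z_{\mathcal{D},t}$. I will therefore state the per-token condition explicitly, as the hypothesis implied by the theorem, and note that replacement by unrelated categories and outright removal are concrete instances of it.
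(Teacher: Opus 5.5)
Your proposal is correct and follows essentially the same route as the paper: split each category token's accessible mass into image, shared, and differing-preceding-candidate parts with shared logits held fixed, use that $Z\mapsto Z_{\mathcal{I},t}/(Z_{\mathcal{I},t}+Z_{\mathcal{O},t}+Z)$ is strictly decreasing, and average over $\mathcal{T}_c$. Two minor differences: the paper assumes only $Z_t^{+}\le Z_t^{-}$ for all $t\in\mathcal{T}_c$ with strict inequality for at least one token, which already gives the strict averaged inequality, so your strict-for-every-token hypothesis is stronger than needed. The paper also treats outright removal in a separate remark rather than inside the theorem, since removal does not preserve the position of $c$.
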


\begin{proof}
Consider two candidate lists that differ by replacing $K\geq 1$ candidates preceding the target category $c$:
\begin{equation}
\begin{aligned}
\mathbf{C}_{c-}
&=(\ldots,b_1,\ldots,b_K,c,\ldots),\\
\mathbf{C}_{c+}
&=(\ldots,b'_1,\ldots,b'_K,c,\ldots),
\end{aligned}
\label{eq:sup_competition_lists}
\end{equation}
where the replaced candidates have matching token lengths, preserving the position of $c$ and the remaining candidate composition and order.

For any category token $t\in\mathcal{T}_c$, the replaced candidates appear before $c$ and are therefore accessible under causal attention.
We denote their token sets under $\mathbf{C}_{c-}$ and $\mathbf{C}_{c+}$ by $\mathcal{B}^{-}$ and $\mathcal{B}^{+}$, respectively, and the shared accessible non-image tokens by $\mathcal{O}_t$.

Following the controlled comparison in Appendix~\ref{sec:attention_formulation}, the attention logits to shared accessible tokens remain unchanged.
For each $t$, define
\begin{equation}
\begin{aligned}
Z_{\mathcal{I},t}
&=\sum_{i\in\mathcal{I}}\exp(s_{ti}),\\
Z_{\mathcal{O},t}
&=\sum_{j\in\mathcal{O}_t}\exp(s_{tj}),\\
Z_t^{\pm}
&=\sum_{j\in\mathcal{B}^{\pm}}\exp(s_{tj}^{\pm}),
\end{aligned}
\label{eq:sup_competition_mass}
\end{equation}
where $s_{tj}^{\pm}$ denotes the attention logit under the corresponding candidate list.
Here, $Z_t^{\pm}$ measures the attention competition contributed by the replaced candidates.

Accordingly, weaker attention competition under $\mathbf{C}_{c+}$ means
\begin{equation}
Z_t^{+}\leq Z_t^{-},
\qquad \forall t\in\mathcal{T}_c,
\label{eq:sup_competition_condition}
\end{equation}
with strict inequality for at least one category token.

According to Eq.~\ref{eq:sup_attention}, the image attention of token $t$ under the two candidate compositions is
\begin{equation}
R_t(\mathbf{C}_{c\pm})
=
\frac{Z_{\mathcal{I},t}}
{Z_{\mathcal{I},t}+Z_{\mathcal{O},t}+Z_t^{\pm}}.
\label{eq:sup_competition_token_attention}
\end{equation}

To examine the effect of candidate competition, we express the image attention as a function of the competing attention mass $Z$:
\begin{equation}
R_t(Z)
=
\frac{Z_{\mathcal{I},t}}
{Z_{\mathcal{I},t}+Z_{\mathcal{O},t}+Z}.
\label{eq:sup_competition_function}
\end{equation}
Taking its derivative with respect to $Z$ gives
\begin{equation}
\frac{\partial R_t}{\partial Z}
=
-\frac{Z_{\mathcal{I},t}}
{(Z_{\mathcal{I},t}+Z_{\mathcal{O},t}+Z)^2}
<0,
\label{eq:sup_competition_derivative}
\end{equation}
since $Z_{\mathcal{I},t}>0$.

Therefore, reducing the competing attention mass increases the attention allocated to the image tokens.
Under Eq.~\ref{eq:sup_competition_condition}, we have
\begin{equation}
R_t(\mathbf{C}_{c+})
\geq
R_t(\mathbf{C}_{c-}),
\qquad \forall t\in\mathcal{T}_c,
\label{eq:sup_competition_token_result}
\end{equation}
with strict inequality for at least one category token.
Averaging over all category tokens according to Eq.~\ref{eq:sup_category_attention} yields
\begin{equation}
R_{\Theta}(c\mid x,\mathbf{C}_{c+},G)
>
R_{\Theta}(c\mid x,\mathbf{C}_{c-},G),
\end{equation}
which completes the proof.
\end{proof}
The above analysis also explains why candidate removal can yield a greater attention gain than candidate replacement.
Under the same controlled setting, removing the competing candidates eliminates their contributions to the attention denominator, whereas replacing them introduces a positive competing attention mass.
For a category token $t\in\mathcal{T}_c$, we therefore have
\begin{equation}
\begin{aligned}
R_t^{\mathrm{rem}}
&=
\frac{Z_{\mathcal{I},t}}
{Z_{\mathcal{I},t}+Z_{\mathcal{O},t}}\\
&>
\frac{Z_{\mathcal{I},t}}
{Z_{\mathcal{I},t}+Z_{\mathcal{O},t}+Z_t^{\mathrm{rep}}}
=
R_t^{\mathrm{rep}},
\end{aligned}
\label{eq:sup_removal_effect}
\end{equation}
where $Z_t^{\mathrm{rep}}>0$ denotes the attention mass contributed by the replacement candidates.
Averaging over all category tokens yields the same inequality for $R_{\Theta}$.
Thus, removing competing candidates maximizes the category-to-image attention among the corresponding removal and replacement interventions, provided that the target category is retained.

\subsection{Proof of Theorem~3}
\label{sec:proof_intervention}

\begin{theorem}[Degradation Risk of Candidate Interventions]
\label{sup:thm:risk}
Consider a uniform distribution over the $M$ candidate positions.
Move-to-front reordering has an average degradation risk of $\frac{M-1}{2M}$ over all possible starting positions, whereas order-preserving candidate reduction has a risk of $1-p_{\mathrm{ret}}$.
Therefore, candidate reduction has lower degradation risk if and only if
\begin{equation}
p_{\mathrm{ret}}>\frac{1}{2}+\frac{1}{2M},
\label{sup:eq:risk_comparison}
\end{equation}
where $p_{\mathrm{ret}}$ denotes the ground-truth retention probability.
\end{theorem}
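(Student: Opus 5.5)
The plan is to compute the two risks directly from the definition of degradation risk: the probability that an intervention either excludes the ground truth $y$ or lowers its category-to-image attention. The case analysis relies on Theorem~\ref{sup:thm:ordering} and Theorem~\ref{sup:thm:composition}. Once both risks are in closed form, the comparison reduces to a one-line inequality.

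\textbf{Move-to-front reordering.} Let the intervention place a fixed candidate, chosen without knowledge of $y$, at the front. Let $y$ start at position $i\in\{1,\ldots,M\}$, uniformly distributed. The outcome for $y$ falls into three cases.
\begin{itemize}
\item If $y$ itself is the candidate moved to the front, its attention increases by Theorem~\ref{sup:thm:ordering}.
\item If the moved candidate originally sat behind $y$, then $y$ is shifted back by one position. The moved candidate now precedes $y$ and becomes accessible to every token in $\mathcal{T}_y$. By Theorem~\ref{sup:thm:ordering}, applied in reverse with $K=1$, the attention of $y$ strictly decreases.
\item If the moved candidate already preceded $y$, the set of tokens accessible to $y$ is unchanged. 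Under the controlled comparison, the attention of $y$ is unchanged, so there is no degradation.
\end{itemize}
We model the moved candidate as uniformly chosen. For $y$ at position $i$, degradation occurs exactly when the moved candidate lies behind $y$, which has probability $(M-i)/M$. Averaging over $i$ gives
\[
\frac{1}{M}\sum_{i=1}^{M}\frac{M-i}{M}=\frac{1}{M^2}\cdot\frac{M(M-1)}{2}=\frac{M-1}{2M}.
\]
This is the claimed average risk.

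\textbf{Order-preserving reduction.} If $y$ is retained, which happens with probability $p_{\mathrm{ret}}$, then $y$ keeps its relative order and the candidates preceding it form a subset of the original ones. Every removed predecessor eliminates a positive term from the attention denominator. By Theorem~\ref{sup:thm:composition}, together with the removal remark after it, the attention of $y$ does not decrease. Shifting $y$ forward only removes accessible competitors, consistent with Theorem~\ref{sup:thm:ordering}. Hence degradation occurs only through exclusion of $y$, and the risk is $1-p_{\mathrm{ret}}$.

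\textbf{Comparison.} Reduction has lower degradation risk exactly when
\[
1-p_{\mathrm{ret}}<\frac{M-1}{2M},
\]
which rearranges to $p_{\mathrm{ret}}>\tfrac12+\tfrac1{2M}$.

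The main obstacle is pinning down the probabilistic model for reordering, since the statement is informal. With the choice above (moved candidate uniform, position of $y$ uniform), the $\frac{M-1}{2M}$ formula comes out exactly. Equivalently, one can read it as $y$ at position $i$ being pushed back by a front insertion with probability $(M-i)/M$.

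A second subtlety is that, under the controlled-logit assumption, unchanged prefixes give equal rather than decreased attention. Those cases must therefore be counted as non-degrading so that the risk matches the stated value.
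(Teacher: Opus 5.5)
Your proof is correct and follows the paper's argument: the paper fixes the moved position $k$, computes $P(y\in\{c_1,\ldots,c_{k-1}\})=\frac{k-1}{M}$, and averages over $k$, which counts the same pairs $(i,k)$ with $i<k$ that you count by conditioning on the position of $y$; the reduction risk $1-p_{\mathrm{ret}}$ and the final rearrangement are identical. Your only addition is justifying ``shifted backward'' and ``retained'' as attention decrease and non-decrease via Theorems~1 and~2, which the paper avoids by defining the two risks directly as backward-shift and exclusion probabilities (strictly, move-to-front does not preserve the order of the other candidates, so you are reusing the proof of Theorem~1 rather than its statement, which is harmless under the fixed-logit assumption).
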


\begin{proof}
Consider a candidate list $\mathbf{C}=(c_1,\ldots,c_M)$ containing the ground-truth category $y$.
We define the degradation risk of candidate reordering as the probability that $y$ is shifted backward, and that of candidate reduction as the probability that $y$ is excluded.
Since the ground-truth position is unknown, we consider a uniform distribution over the $M$ candidate positions:
\begin{equation}
P(y=c_i)=\frac{1}{M},
\qquad i=1,\ldots,M.
\label{eq:sup_uniform_position}
\end{equation}

We first consider move-to-front reordering.
Suppose the candidate at position $k$ is moved to the first position while preserving the relative order of the remaining candidates.
This operation shifts exactly the preceding $k-1$ candidates backward by one position.
Therefore, the degradation risk for this operation is
\begin{equation}
D_{\mathrm{ord}}(k)
=
P(y\in\{c_1,\ldots,c_{k-1}\})
=
\frac{k-1}{M}.
\label{eq:sup_reordering_risk}
\end{equation}
Averaging over all possible starting positions $k=1,\ldots,M$ gives
\begin{equation}
\begin{aligned}
D_{\mathrm{ord}}
&=
\frac{1}{M}\sum_{k=1}^{M}D_{\mathrm{ord}}(k)\\
&=
\frac{1}{M^2}\sum_{k=1}^{M}(k-1)\\
&=
\frac{M-1}{2M}.
\end{aligned}
\label{eq:sup_average_reordering_risk}
\end{equation}

We next consider order-preserving candidate reduction.
Given a reduced candidate list $\mathbf{C}'\subseteq\mathbf{C}$, the relative order of the retained candidates remains unchanged.
Thus, the ground-truth category cannot be shifted backward if retained, and the degradation event occurs only when $y$ is excluded.
With the ground-truth retention probability defined as $p_{\mathrm{ret}}=P(y\in\mathbf{C}')$, the degradation risk is
\begin{equation}
D_{\mathrm{red}}
=
P(y\notin\mathbf{C}')
=
1-p_{\mathrm{ret}}.
\label{eq:sup_reduction_risk}
\end{equation}

Finally, candidate reduction has lower degradation risk than move-to-front reordering if and only if
\begin{equation}
\begin{aligned}
D_{\mathrm{red}}<D_{\mathrm{ord}}
&\iff
1-p_{\mathrm{ret}}<\frac{M-1}{2M}\\
&\iff
p_{\mathrm{ret}}>\frac{1}{2}+\frac{1}{2M}.
\end{aligned}
\end{equation}
This completes the proof.
\end{proof}

\section{Empirical Analysis}
\label{sec:sup_empirical}

In this section, we provide additional empirical analyses of FGVR Laziness.
Specifically, we first examine the effects of candidate ordering on category-to-image attention and recognition accuracy, and then visualize how candidate reduction alters attention allocation and recovers incorrect predictions.

\begin{figure*}[t]
    \centering
    \includegraphics[width=0.98\linewidth]{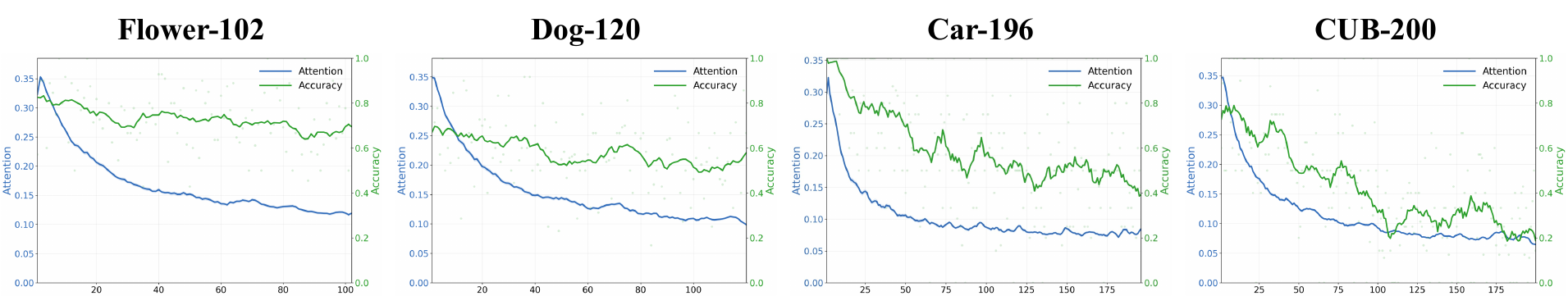}
    \caption{
    Empirical analysis of the Candidate Ordering Effect.
    For each dataset, we randomly sample 1,500 test images and independently shuffle the complete candidate list for each image.
    The blue curve shows the average category-to-image attention of the ground-truth category at each position, while the green curve shows the corresponding recognition accuracy.
    }
    \label{fig:position_effect}
\end{figure*}

\subsection{Empirical Analysis of Candidate Ordering Effect}

To empirically examine the Candidate Ordering Effect, we conduct a controlled position randomization experiment on four representative FGVR datasets using Qwen2.5-VL-7B.
For each dataset, we randomly sample 1,500 test images and independently shuffle the complete candidate list for each image while preserving its composition.
We measure category-to-image attention using ground-truth category tokens as queries and image patch tokens as keys.
Specifically, we first average the attention weights across category tokens and attention heads within each layer.
To capture visual regions receiving high category-related attention at different layers, we retain the maximum attention for each image patch across all 28 layers and sum the resulting values over image patches.
We then examine the category-to-image attention and recognition accuracy at different ground-truth category positions.

As shown in Fig.~\ref{fig:position_effect}, category-to-image attention generally decreases as the ground-truth category moves backward, with a more pronounced decline at earlier positions.
Recognition accuracy exhibits a similar downward trend, particularly on Cars-196 and CUB-200.
Since the query image, candidate composition, and recognition guidance remain unchanged, these results empirically support the Candidate Ordering Effect analyzed in Appendix~\ref{sec:sup_theory}, showing that candidate ordering alone can affect both attention allocation and recognition outcomes.
More importantly, considering category-to-image attention as a proxy for visual-conceptual association, these observations suggest that MLLMs might fail to consistently associate visual cues with the ground-truth category concept under different candidate arrangements, providing further evidence of FGVR Laziness.

\subsection{Visualization of FGVR Laziness}

\begin{figure*}[t]
    \centering
    \includegraphics[width=0.98\linewidth]{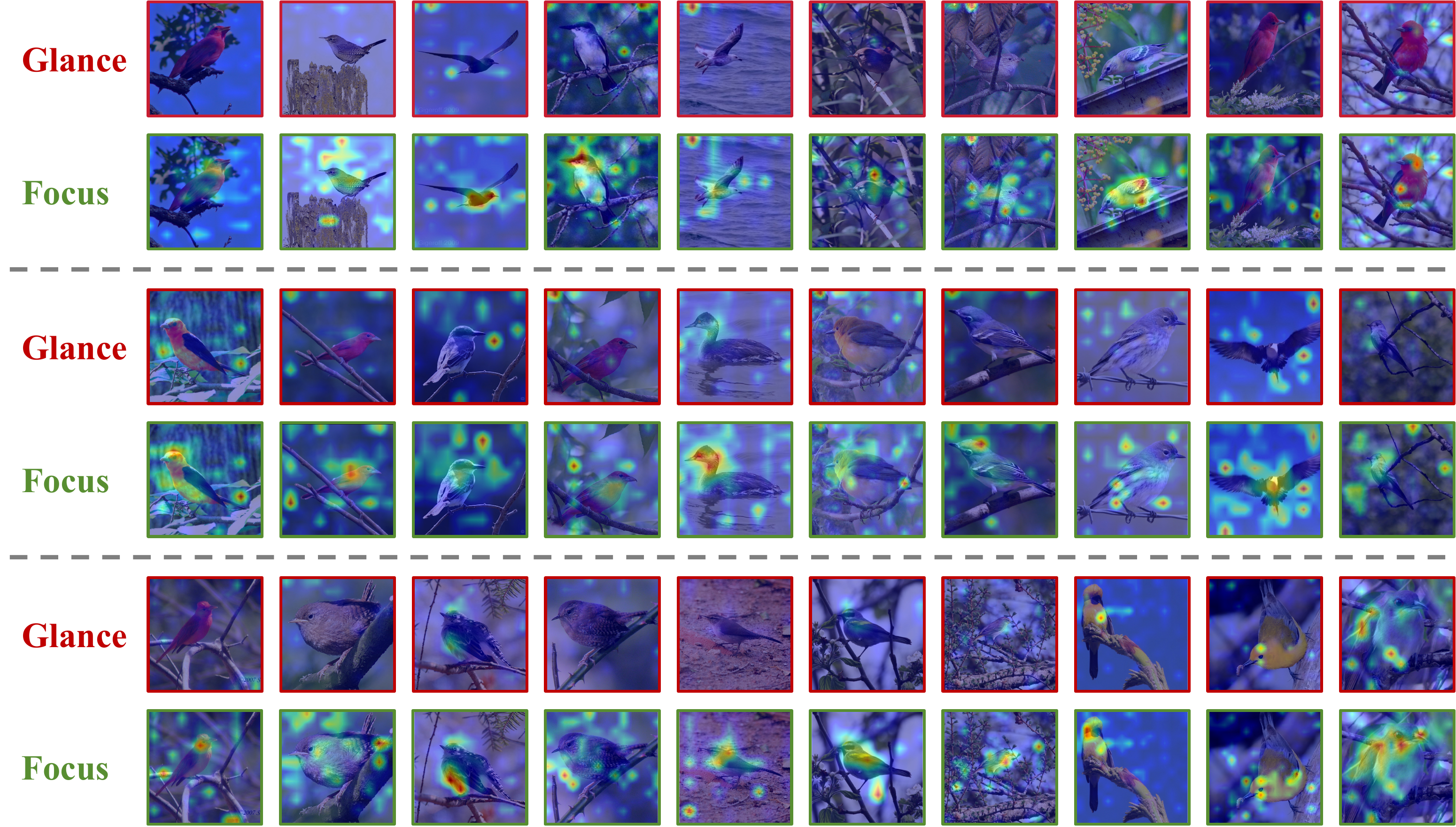}
    \caption{
    Visualization of FGVR Laziness on CUB-200.
    We show representative samples misclassified by Qwen2.5-VL-7B under full-category recognition but correctly recognized after retaining only the ground-truth category and its original incorrect prediction.
    Glance (red) and Focus (green) correspond to recognition under the complete and reduced candidate lists, respectively, with attention maps showing category-to-image attention for the ground-truth category.
    Compared with Glance, Focus generally exhibits stronger and more concentrated attention on the target bird.
    }
    \label{fig:laziness_visualization}
\end{figure*}

To further visualize FGVR Laziness, we examine recoverable recognition errors on CUB-200 using Qwen2.5-VL-7B.
Specifically, we collect samples misclassified under full-category recognition and re-evaluate them after retaining only the ground-truth category and the original incorrect prediction.
Using the same attention aggregation as in the preceding analysis, Fig.~\ref{fig:laziness_visualization} presents representative examples of error recovery, comparing the category-to-image attention of the ground-truth category under the complete (Glance) and reduced (Focus) candidate lists.

Taking the first sample as an example, the model initially misclassifies Summer Tanager as Vermilion Flycatcher, with relatively weak attention from the ground-truth category to the target bird.
After reducing the candidate list to these two categories, the prediction is corrected to Summer Tanager, accompanied by stronger and more concentrated attention on the bird.
Similar patterns are observed in the remaining examples.
Notably, since both the ground-truth category and the original incorrect prediction are retained, these corrections suggest that the initial errors cannot be attributed solely to an inability to distinguish the two categories.
Instead, the accompanying attention changes suggest that the original candidate context might hinder the association between discriminative visual cues and the ground-truth category concept, providing further empirical evidence of FGVR Laziness.
These findings suggest that strengthening the association between visual cues and category concepts to mitigate FGVR Laziness could be a promising approach to improving the FGVR performance of MLLMs at inference time.

\section{Experimental Details and Additional Analysis}
\label{sec:sup_detail}

\subsection{Datasets}
\label{sec:supp_datasets}

We conduct experiments on 13 benchmarks covering three recognition scenarios.
For fine-grained image recognition, we use Oxford Flowers-102~\cite{flower}, CUB-200-2011~\cite{cub}, Oxford-IIIT Pet~\cite{pet}, FGVC-Aircraft~\cite{aircraft}, Stanford Cars~\cite{car}, Stanford Dogs~\cite{dog}, DTD~\cite{dtd}, Food-101~\cite{food}, and SUN397~\cite{sun}.
For fine-grained action recognition, we use UCF101~\cite{ucf101} and HMDB51~\cite{hmdb51}.
For general and out-of-distribution (OOD) image recognition, we use ImageNet-1K~\cite{imagenet-1k} and ImageNet-Sketch~\cite{imagenet-sketch}, respectively.
The dataset statistics are summarized in Table~\ref{tab:dataset_statistics}.

We follow the official dataset splits, using split 1 for datasets with multiple predefined splits.
Unless otherwise specified, we evaluate on the corresponding test sets and randomly sample four reference samples per category from the training sets using a fixed seed of 42.
The same 4-shot reference set is reused throughout all reference-based procedures, including PCI and CSC.
For ImageNet-Sketch, which lacks an official training split, we sample reference images from the complete dataset and evaluate on the remaining images, as detailed in Table~\ref{tab:dataset_statistics}.

\begin{table}[t]
\centering
\caption{Statistics of the 13 benchmarks used in our experiments.}
\label{tab:dataset_statistics}

\renewcommand{\arraystretch}{1.15}
\setlength{\tabcolsep}{8pt}

\begin{tabular*}{0.95\columnwidth}{@{\extracolsep{\fill}}l r r l}
\hline
Dataset & Classes & Test Samples & Modality \\
\hline
Oxford Flowers-102~\cite{flower}       & 102   & 6,149  & Image \\
CUB-200-2011~\cite{cub}                & 200   & 5,794  & Image \\
Oxford-IIIT Pet~\cite{pet}             & 37    & 3,669  & Image \\
FGVC-Aircraft~\cite{aircraft}          & 100   & 3,333  & Image \\
Stanford Cars~\cite{car}               & 196   & 8,041  & Image \\
Stanford Dogs~\cite{dog}               & 120   & 8,580  & Image \\
DTD~\cite{dtd}                         & 47    & 1,880  & Image \\
Food-101~\cite{food}                   & 101   & 25,250 & Image \\
SUN397~\cite{sun}                      & 397   & 19,850 & Image \\
\hline
UCF101~\cite{ucf101}                   & 101   & 3,783  & Video \\
HMDB51~\cite{hmdb51}                   & 51    & 1,530  & Video \\
\hline
ImageNet-1K~\cite{imagenet-1k}         & 1,000 & 50,000 & Image \\
ImageNet-Sketch$^\dagger$~\cite{imagenet-sketch} & 1,000 & 46,889 & Image \\
\hline
\end{tabular*}

\vspace{1mm}
\parbox{0.95\columnwidth}{\footnotesize
$^\dagger$ImageNet-Sketch does not provide an official training split.
We therefore sample four reference images per category from the complete dataset and evaluate on the remaining images.}
\end{table}

\begin{figure}[!t]
    \centering
    \begin{promptbox}
        [Category-Specific Visual Knowledge Construction]
        {PromptGrayBg}
        {PromptGrayFrame}

You are building a reusable category skill for fine-grained image recognition.

\vspace{0.5mm}
This skill will be used to distinguish the target category from visually similar categories.
So the output should be concise, precise, and recognition-oriented, not a general description.

\vspace{0.5mm}
Reference images of the same target category are provided.
Carefully examine the images and extract stable category-level visual knowledge that will later help distinguish this category from visually similar categories.
Do not caption the images individually.

\vspace{0.5mm}
\textbf{Dataset:} \texttt{[DATASET]}

\textbf{Target category name:} \texttt{[TARGET CATEGORY]}

\vspace{0.5mm}
\textbf{Write two fields:}

\begin{itemize}
    \setlength{\itemsep}{0pt}
    \setlength{\parskip}{0pt}
    \setlength{\topsep}{1pt}
    \item \textbf{description}: one concise English sentence giving the stable category prototype.
    It should capture the overall appearance and the most stable visual anchor.
    \item \textbf{visual\_checklist}: 1 to 3 English items, ordered from most important to least important.
    Each item must be an instruction-style recognition checkpoint.
    First state what region, part, or structure to inspect, then state the most distinctive evidence of this category there.
\end{itemize}

\textbf{Preferred checklist style:}

\texttt{"Check [region / structure]:}

\texttt{[TARGET CATEGORY] [distinctive evidence]."}

\vspace{0.5mm}
Use only stable cross-image evidence.
Prefer stable and discriminative visual cues.
Avoid vague or generic statements.
Do not include background, habitat, function, or other non-visual knowledge.
Avoid vague statements, background, lighting, viewpoint, crop, and other image-specific factors.

\vspace{0.5mm}
\textbf{Return JSON with this schema:}

{\ttfamily
\{\\
\hspace*{2mm}"description": "...",\\
\hspace*{2mm}"visual\_checklist": ["...", "..."]\\
\}
}

    \end{promptbox}
    \caption{Prompt for constructing category-specific visual knowledge in CSC.}
    \label{fig:app_csc_core}
\end{figure}

\begin{figure}[!t]
    \centering
    \begin{promptbox}
        [Comparative Knowledge Construction]
        {PromptGrayBg}
        {PromptGrayFrame}

You are building comparative skill knowledge for fine-grained image recognition.

\vspace{0.5mm}
\texttt{image\_1} is the true category:\texttt{[TRUE CATEGORY]}

\texttt{image\_2} is the confused category:\texttt{[CONFUSED CATEGORY]}

\vspace{0.5mm}
The model often confuses \texttt{[TRUE CATEGORY]} with \texttt{[CONFUSED CATEGORY]}.
Write one grounded correction rule for \texttt{[TRUE CATEGORY]}.

\vspace{0.5mm}
Use only visible evidence from these two images.
Keep only the strongest 1 or 2 discrimination cues.
Each cue must be tied to a specific visible region, part, or structure.
Do not repeat the general description of the true category.
Do not use vague comparisons without concrete visual evidence.

\vspace{0.5mm}
Write exactly one sentence in this form:

\texttt{"Compared with [CONFUSED CATEGORY],}

\texttt{[TRUE CATEGORY] shows ..., while}

\texttt{[CONFUSED CATEGORY] shows ..."}

\vspace{0.5mm}
\textbf{Return JSON only:}

{\ttfamily
\{\\
\hspace*{2mm}"compare\_true\_to\_confused": "..."\\
\}
}

    \end{promptbox}
    \caption{Prompt for constructing confusion-specific comparative knowledge in CSC.}
    \label{fig:app_csc_cmp}
\end{figure}
\subsection{Implementation Details}
\label{sec:supp_implementation}

Unless otherwise specified, all experiments are conducted under the shared 4-shot reference setting described in Appendix~\ref{sec:supp_datasets}.
For PCI, we adopt CLIP ViT-B/16~\cite{clip} and SigLIP-base-patch16-224~\cite{siglip} as the VLM backbones, with the corresponding variants denoted as $\mathrm{LIVE}_c$ and $\mathrm{LIVE}_s$, respectively.
We use cosine similarity for CLIP and native image--text similarity logits for SigLIP, setting the ranking-reliability threshold $\gamma$ to $0.035$ and $2.5$, respectively, to accommodate their different score scales.
We set $B=5$ for affinity construction and multi-seed candidate inference, the support threshold to $\rho=0.6$, and the candidate budget to $K=20$.

For CSC, each MLLM independently constructs its own Category Skills using the same 4-shot reference set as PCI.
Specifically, one reference sample is randomly selected from each category to construct category-specific visual knowledge, while confusion-specific comparative knowledge is derived from one reference sample of each category in the corresponding confusion pair.
For SCR, prediction uncertainty is measured using average answer entropy, with the threshold $\eta$ set to $0.1$ for Qwen2.5-VL-7B and $0.01$ for Qwen3-VL-8B.
For uncertain predictions, confused categories are identified through prefix matching based on the first-token distribution.

All model parameters remain frozen throughout LIVE.
For video datasets, we use the middle frame for VLM-based candidate inference, while the original video is directly provided to the MLLM using its default video-processing configuration without additional task-specific preprocessing.

\subsection{Prompt Templates}
\label{app:prompts}

For reproducibility, we provide the prompt templates used in Category Skill Construction (CSC) and Skill-inspired Confusion Resolution (SCR).
CSC uses two prompts to construct category-specific visual knowledge and confusion-specific comparative knowledge, while SCR performs two-stage recognition with different levels of Category Skill guidance.
For clarity, we present the image-based prompts used for FGVR.
The prompts for fine-grained action recognition follow the same construction logic, with image-specific visual cues replaced by action-related evidence such as motion patterns, temporal dynamics, and object interactions.
Runtime inputs are represented by bracketed placeholders below.

\subsubsection{Prompts for CSC}
\label{app:prompt_csc}

CSC constructs each Category Skill from complementary category-specific visual knowledge and confusion-specific comparative knowledge.
For each category, one reference image is randomly selected from the shared 4-shot reference set to construct its description and visual checklist using the prompt in Fig.~\ref{fig:app_csc_core}.
For each confusion pair identified from this reference set, one reference image is randomly selected from each category to construct comparative knowledge using the prompt in Fig.~\ref{fig:app_csc_cmp}.
All reference images used in CSC are drawn from the same fixed 4-shot reference set used by PCI, without introducing additional reference data.

\begin{figure}[!t]
    \centering
    \begin{promptbox}
        [Stage-I Recognition]
        {PromptGrayBg}
        {PromptGrayFrame}

You are an expert in fine-grained visual recognition.

\vspace{0.5mm}
You are good at distinguishing visually similar categories by carefully comparing subtle visual cues such as shape, structure, parts, texture, markings, and color patterns.

\vspace{0.5mm}
\textbf{Task:}

I will provide an image and a set of candidate labels.
Your task is to choose the single label that best matches the image.

\vspace{0.5mm}
Carefully compare the candidate labels using fine-grained visual cues.

\vspace{0.5mm}
\textbf{You must follow these rules strictly:}

\begin{enumerate}
    \setlength{\itemsep}{0pt}
    \setlength{\parskip}{0pt}
    \setlength{\topsep}{1pt}
    \item Select exactly one label from the candidate labels.
    \item Do not output anything except the selected label.
    \item Copy the label exactly as it appears in the candidate labels.
\end{enumerate}

Now, carefully observe this image and choose from the following candidate labels:

\vspace{0.5mm}
\texttt{[CANDIDATE LABELS]}

\vspace{0.5mm}
\texttt{[CATEGORY SKILL HEADERS]}

    \end{promptbox}
    \caption{Prompt for Stage-I recognition in SCR, where \texttt{[CATEGORY SKILL HEADERS]} contains the category name and description of each candidate.}
    \label{fig:app_scr_stage1}
\end{figure}

\begin{figure}[!t]
    \centering
    \begin{promptbox}
        [Stage-II Recognition]
        {PromptGrayBg}
        {PromptGrayFrame}

You are an expert in fine-grained visual recognition.

\vspace{0.5mm}
Carefully observe the given image and then choose the most appropriate category from the candidate list.

\vspace{0.5mm}
Output exactly one label copied from the candidate list.

\vspace{0.5mm}
Choose from the candidate classes:

\texttt{[CANDIDATE LABELS]}

\vspace{0.5mm}
\texttt{[CATEGORY SKILLS]}

    \end{promptbox}
    \caption{
    Prompt for Stage-II recognition in SCR.
    \texttt{[CATEGORY SKILLS]} contains the description of each candidate, while the visual checklist and available comparative knowledge are additionally provided for the identified confused categories.
    }
    \label{fig:app_scr_stage2}
\end{figure}

\subsubsection{Prompts for SCR}
\label{app:prompt_scr}

SCR performs recognition in two stages.
Stage-I uses only the skill headers, consisting of category names and descriptions, for lightweight recognition, as shown in Fig.~\ref{fig:app_scr_stage1}.
The average answer entropy is then used to estimate prediction uncertainty.
For uncertain samples, confused categories are identified from the first-token distribution through prefix matching.
Stage-II retains the same candidate list and descriptions of all candidates, while selectively appending visual checklists and available comparative knowledge only for the identified confused categories, as shown in Fig.~\ref{fig:app_scr_stage2}.
In particular, comparative knowledge is included only when the corresponding compared category is also among the identified confused categories.

\begin{figure}[!t]
    \centering
    \begin{promptbox}
        [Example Category Skill]
        {PromptGrayBg}
        {PromptGrayFrame}

\begin{minipage}[t]{0.27\linewidth}
\vspace{0pt}
    \centering
    \includegraphics[width=\linewidth]{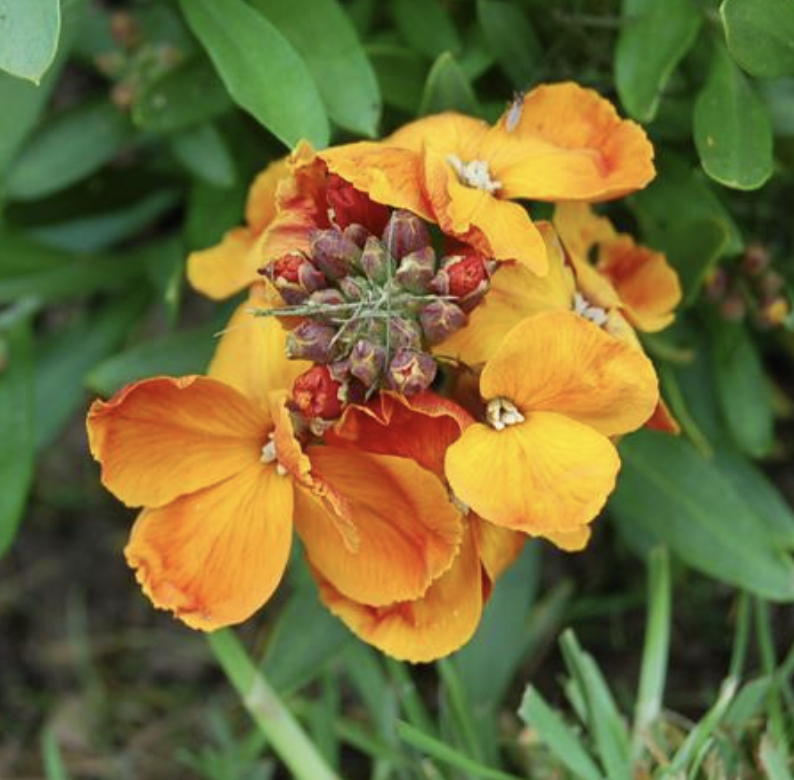}
\end{minipage}
\hfill
\begin{minipage}[t]{0.68\linewidth}
\vspace{0pt}

\textbf{\textcolor{DeepOrange}{Skill Header}}

\noindent\textcolor{black!18}{\rule{\linewidth}{0.35pt}}

\vspace{0.8mm}
\textbf{\textcolor{DeepOrange}{Category}:} wallflower

\vspace{0.8mm}
\textbf{\textcolor{DeepOrange}{Description}:}

wallflower with vibrant orange petals and a central cluster of small, reddish-brown buds.

\end{minipage}

\vspace{1.5mm}

\textbf{\textcolor{DeepOrange}{Skill Body}}

\noindent\textcolor{black!18}{\rule{\linewidth}{0.35pt}}

\vspace{0.8mm}
\textbf{\textcolor{DeepOrange}{Visual checklist}:}

\begin{itemize}
    \setlength{\itemsep}{1pt}
    \setlength{\parskip}{0pt}
    \setlength{\topsep}{1pt}
    \setlength{\leftmargin}{5mm}
    \item Check the flower: vibrant orange petals with a central bud cluster.
    \item Check the bud structure: small, reddish-brown buds tightly grouped in a central cluster.
    \item Check the petal arrangement: four large, overlapping petals forming a cup-like shape.
\end{itemize}

\vspace{0.5mm}

\textbf{\textcolor{DeepOrange}{Comparative knowledge}:}

Compared with \textit{californian poppy}, \textit{wallflower} shows a more densely clustered and layered petal arrangement, while \textit{californian poppy} displays a single, broad, and open petal structure.

    \end{promptbox}
    \caption{
    A representative Category Skill constructed by CSC using Qwen2.5-VL-7B for \textit{wallflower}.
    The skill combines category-specific visual knowledge with comparative knowledge for distinguishing \textit{wallflower} from its confused category \textit{californian poppy}.
    }
    \label{fig:app_category_skill_qwen25}
\end{figure}

\begin{figure}[!t]
    \centering
    \begin{promptbox}
        [Example Category Skill]
        {PromptGrayBg}
        {PromptGrayFrame}

\begin{minipage}[t]{0.27\linewidth}
\vspace{0pt}
    \centering
    \includegraphics[width=\linewidth]{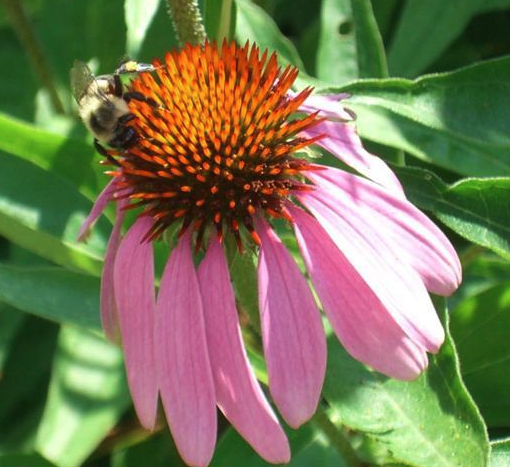}
\end{minipage}
\hfill
\begin{minipage}[t]{0.68\linewidth}
\vspace{0pt}

\textbf{\textcolor{DeepPink}{Skill Header}}

\noindent\textcolor{black!18}{\rule{\linewidth}{0.35pt}}

\vspace{0.8mm}
\textbf{\textcolor{DeepPink}{Category}:} purple coneflower

\vspace{0.8mm}
\textbf{\textcolor{DeepPink}{Description}:}

A flower with pinkish-purple ray florets surrounding a prominent, spiky orange-brown central cone.

\end{minipage}

\vspace{1.5mm}

\textbf{\textcolor{DeepPink}{Skill Body}}

\noindent\textcolor{black!18}{\rule{\linewidth}{0.35pt}}

\vspace{0.8mm}
\textbf{\textcolor{DeepPink}{Visual checklist}:}

\begin{itemize}
    \setlength{\itemsep}{1pt}
    \setlength{\parskip}{0pt}
    \setlength{\topsep}{1pt}
    \setlength{\leftmargin}{5mm}
    \item Check center: purple coneflower has a dense, spiky orange-brown cone.
    \item Check petals: purple coneflower has drooping, pinkish-purple ray florets.
\end{itemize}

\vspace{0.5mm}

\textbf{\textcolor{DeepPink}{Comparative knowledge}:}

Compared with \textit{blanket flower}, \textit{purple coneflower} shows a prominent, spiky orange cone center, while \textit{blanket flower} shows a flatter, yellow-centered disk surrounded by red-and-yellow bicolored petals.

    \end{promptbox}
    \caption{
    A representative Category Skill constructed by CSC using Qwen3-VL-8B for \textit{purple coneflower}.
    The skill combines category-specific visual knowledge with comparative knowledge for distinguishing \textit{purple coneflower} from its confused category \textit{blanket flower}.
    }
    \label{fig:app_category_skill_qwen3}
\end{figure}

\subsection{Examples of Category Skills}
\label{sec:supp_skill_examples}

To provide a concrete view of the Category Skills constructed by CSC, we present two representative examples generated by Qwen2.5-VL-7B and Qwen3-VL-8B in Figs.~\ref{fig:app_category_skill_qwen25} and~\ref{fig:app_category_skill_qwen3}, respectively.
Each Category Skill contains a lightweight skill header consisting of the category name and description, together with a detailed skill body comprising a visual checklist and confusion-specific comparative knowledge.

\subsection{Additional Analysis of PCI}
\label{sec:supp_pci_analysis}

\begin{figure}[t]
    \centering
    \includegraphics[width=0.47\linewidth]{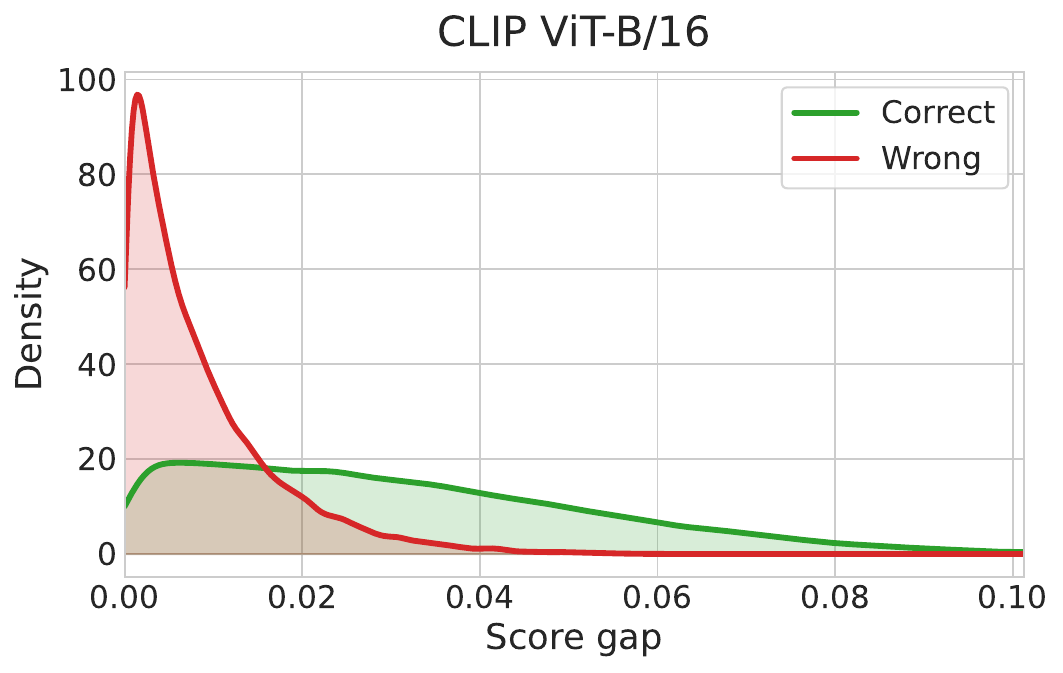}
    \hfill
    \includegraphics[width=0.47\linewidth]{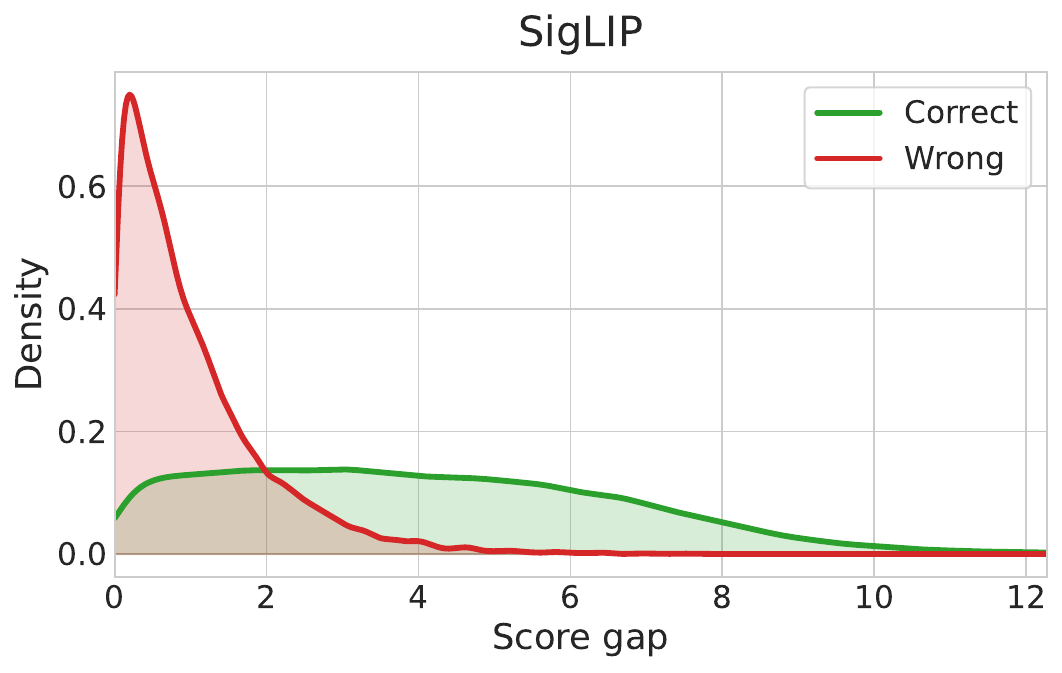}
    \caption{
    Score-gap distributions for correct and incorrect VLM predictions over the nine FGVR datasets.
    The score gap is computed between the top-1 and top-2 predictions for CLIP ViT-B/16 (left) and SigLIP (right).
    }
    \label{fig:pci_gap}
\end{figure}

To further examine the ranking-reliability criterion in PCI, we compare the score-gap distributions of correct and incorrect VLM predictions over the nine FGVR datasets.
As shown in Fig.~\ref{fig:pci_gap}, incorrect predictions from both CLIP ViT-B/16 and SigLIP are concentrated in the small-gap region, whereas correct predictions extend more broadly toward larger gaps.
This result supports the score gap as an effective signal for adapting candidate inference according to VLM ranking reliability.
We set $\gamma$ to $0.035$ for CLIP ViT-B/16 and $2.5$ for SigLIP, and keep them fixed across all datasets.

\end{document}